\newtheorem{theorem}{Theorem}
\newtheorem{corollary}{Corollary}
\newtheorem{definition}{Definition}
\DeclareMathOperator*{\argmin}{\mathop{\rm argmin}}
\DeclarePairedDelimiterX{\KL}[2]{\mathrm{KL}[}{]}{#1\;\delimsize\|\;#2}
\DeclarePairedDelimiterX\braket[2]{\langle}{\rangle}{#1 \delimsize\vert #2}
\newcommand{\kce}{\mathrm{kCE}}
\newcommand{\calx}{\mathcal{X}}
\newcommand{\calg}{\mathcal{G}}
\newcommand{\lsq}{\ell_{\mathrm{sq}}}
\newcommand{\len}{\ell_{\mathrm{xent}}}
\newcommand{\real}{\mathbb{R}}
\newcommand{\lpsi}{\ell^{\psi}}
\newcommand{\pgap}{\mathrm{pGap}}
\newcommand{\lip}{\mathrm{Lip}}
\newcommand{\dCE}{\mathrm{dCE}_\mathcal{D}(f)}
\newcommand{\nte}{n_\mathrm{te}}
\newcommand{\nre}{n_\mathrm{re}}
\newcommand{\ntr}{n_\mathrm{tr}}
\newcommand{\ece}{\mathrm{ECE}}
\newcommand{\smce}{\mathrm{smCE}}
\newcommand{\binece}{\mathrm{binnedECE}}
\newcommand{\Ex}{\mathbb{E}}
\newcommand{\calo}{\mathcal{O}}
\newcommand{\calh}{\mathcal{H}}
\newcommand{\cali}{\mathcal{I}}
\newcommand{\calk}{\mathcal{K}}
\newcommand{\cald}{\mathcal{D}}
\newcommand{\calf}{\mathcal{F}}
\newcommand{\calcald}{\mathrm{cal}(\mathcal{D})}
\title{$L_2$-Regularized Empirical Risk Minimization Guarantees \\ Small Smooth Calibration Error}
\author{%
  Masahiro Fujisawa$^{*}$\\
  The University of Osaka / RIKEN AIP / Lattice Lab, Toyota Motor Corporation \\
  \texttt{fujisawa@ist.osaka-u.ac.jp} \\
  \And
  Futoshi Futami\thanks{Equal contribution.}\\
    The University of Osaka / RIKEN AIP / The University of Tokyo \\
  \texttt{futami.futoshi.es@osaka-u.ac.jp}
  }
\date{}
\begin{document}
\maketitle

\begin{abstract}
Calibration of predicted probabilities is critical for reliable machine learning, yet it is poorly understood how standard training procedures yield well-calibrated models.
This work provides the first theoretical proof that canonical $L_{2}$-regularized empirical risk minimization directly controls the smooth calibration error (smCE) without post-hoc correction or specialized calibration-promoting regularizer.
We establish finite-sample generalization bounds for smCE based on optimization error, regularization strength, and the Rademacher complexity.
We then instantiate this theory for models in reproducing kernel Hilbert spaces, deriving concrete guarantees for kernel ridge and logistic regression. 
Our experiments confirm these specific guarantees, demonstrating that $L_{2}$-regularized ERM can provide a well-calibrated model without boosting or post-hoc recalibration.
The source code to reproduce all experiments is available at \url{https://github.com/msfuji0211/erm_calibration}.
\end{abstract}

\section{Introduction}\label{sec_intro}
Calibration—the alignment between predicted probabilities and the true label frequency—is essential for building reliable machine learning models, especially in risk-sensitive applications~\citep{Dawid1982, widmann2021calibration}.  In such settings, calibration metrics are central to assessing model reliability~\citep{Kuleshov2015, gneiting2007probabilistic, kuleshov2022calibrated}, motivating the need for theoretical guarantees on the calibration performance of learned models. To achieve well-calibrated models, a variety of post-hoc recalibration techniques~\citep{guo2017calibration, zadrozny2001obtaining, gupta21b} and regularization-based training strategies~\citep{kuleshov2022calibrated, kumar2019verified, dheur2023large, NEURIPS2022_33d6e648} have been proposed. However, most of these methods are supported mainly by empirical evidence, and the theoretical principles underlying how to design algorithms that achieve both high accuracy and low calibration error remain poorly understood.

This work addresses this problem by providing a theoretical analysis of the relationship between empirical risk minimization (ERM)~\citep{Mohri}—a core principle in machine learning—and calibration error. While ERM with proper losses~\citep{gneiting2007strictly, lakshminarayanan2017simple} is widely used for classification due to its ability to achieve high accuracy and approximate true conditional probabilities, recent studies have shown that it can still yield poorly calibrated models, particularly in deep learning~\citep{guo2017calibration}.  This has motivated growing interest in understanding how loss functions and training algorithms influence calibration performance. A key recent development is the notion of the \emph{post-processing gap}~\citep{blasiok2023when}, which characterizes calibration error as the improvement obtainable through Lipschitz transformations. This perspective has attracted increasing attention~\citep{blasiok2023loss, pmlr-v202-globus-harris23a, hansen2024multicalibration}.

Building on this framework, \citet{blasiok2023when} showed that minimizing a proper loss with a structured regularization can yield well-calibrated models. While their analysis provides valuable theoretical insights, it has two key limitations: (1) the optimization is defined over population risk, which is impractical for implementation; and (2) the regularization is abstract and does not correspond to standard training algorithms used in practice.

To address these limitations, in this work, we provide a theoretical analysis showing that calibration error—specifically, the smooth calibration error (CE)~\citep{blasiok2023unify}—can be effectively controlled via $L_2$-regularized ERM. This establishes a principled foundation for designing calibration-aware algorithms using standard ERM. Our main contributions are as follows:

We consider binary classification under $L_2$-regularized ERM with a hypothesis class closed under Lipschitz transformations. In this setting, we show that the smooth CE computed on the training data--referred to as the \emph{training smooth CE}--can be bounded in terms of the regularization coefficient and the optimization error (Theorem~\ref{thm_erm_smooth}). We further establish a generalization bound that quantifies the gap between the smooth CE (defined over the population) and training smooth CE via the Rademacher complexity~\citep{Mohri} of the hypothesis class (Theorem~\ref{thm_erm_smooth_gen}). These results reveal how regularization, optimization, and function class complexity jointly control smooth CE.

To demonstrate practical applicability, we instantiate our theory for models in reproducing kernel Hilbert spaces (RKHS)~\citep{Mohri}.
Our analysis identifies a key distinction based on the input dimensionality: the RKHS induced by the Laplace kernel in one dimension satisfies our closure assumption. 
This leads to guarantees with a faster convergence rate for kernel ridge and logistic regression (Corollaries~\ref{thm_kernel_r1_square_gen} and~\ref{thm_kernel_r1_logistic_gen}), a setting particularly relevant for the practical task of \emph{recalibration}~\citep{zadrozny2001obtaining}.
In contrast, we prove that this property does not hold in higher dimensions or for other universal kernels like the Gaussian kernel (Theorem~\ref{thm_composite_higher_dim}).

Even in these more general settings where the closure property is absent, we prove that smooth CE can still be controlled for any universal kernel, albeit with a slower convergence rate (Theorem~\ref{theorem_universal__kernel}). 
Furthermore, our bounds provide a formal characterization of the bias-variance trade-off governed by the regularization coefficient, where balancing model complexity and goodness-of-fit is critical for achieving a small smooth CE (Corolaries~\ref{cor_erm_sq} and~\ref{col_klr}).

In conclusion, our work shifts the perspective on calibration from a post-hoc correction to an intrinsic property of well-posed, regularized learning. We demonstrate that smooth CE can be controlled via the $L_2$-regularized ERM, providing a principled foundation for designing models that are inherently both accurate and reliable.

\section{Preliminaries}

We denote random variables by uppercase letters (e.g., $X$) and deterministic quantities by lowercase letters (e.g., $x$).  
The Euclidean inner product and norm are denoted by $\cdot$ and $\|\cdot\|$, respectively.

\subsection{Binary Classification}\label{subsec:binary}
We assume that data points $(X,Y) \in \mathcal{Z} = \mathcal{X} \times \mathcal{Y}$ are drawn from a distribution $\cald$, where $\mathcal{X}$ is the input space and $\mathcal{Y} = \{0,1\}$ is the label space.
For a given input $X=x$, the label $Y$ is drawn from a Bernoulli distribution $Y \sim \mathrm{Ber}(v)$ for some $v \in [0,1]$ with the true conditional probability $f^*(x) = \mathbb{E}[Y \mid x]$.
Let $\calf$ be a hypothesis class consisting of functions $f: \mathcal{X} \to [0,1]$.
A predictor $f \in \mathcal{F}$ is trained to approximate $f^*$ based on samples drawn from $\mathcal{D}$.

To evaluate the performance of $f$, we define a loss function $\ell: [0,1] \times \mathcal{Y} \to \mathbb{R}$, denoted by $\ell(v, y)$.
We often adopt a \emph{proper loss}~\citep{Gneiting2007} as $\ell$, where $\ell$ is called \emph{proper} if, for any $v, v' \in [0,1]$, it satisfies $\mathbb{E}_{Y \sim \mathrm{Ber}(v)}[\ell(v, Y)] \leq \mathbb{E}_{Y \sim \mathrm{Ber}(v)}[\ell(v', Y)]$.
This paper focuses on the two such losses:  
the squared loss $\lsq(v, y) \coloneqq (y - v)^2$ and the cross-entropy loss $\len(v, y) \coloneqq -y\log v - (1 - y)\log(1 - v)$. 
We note that our analysis framework is general and can be extended to other proper losses mentioned by \citet{blasiok2023when} (see Appendix~\ref{app_metrics_proper} for details).

For ease of discussion, we introduce the \emph{Savage representation}~\citep{Gneiting2007} of $\len$. 
Let $\psi: \mathbb{R} \to \mathbb{R}$ be the convex function defined by $\psi(s) \coloneqq \log(1 + e^s)$ for $s \in \mathbb{R}$.  
The proper scoring rule induced by $\psi$ is then given by $\lpsi(s, y) \coloneqq \psi(s) - s y$, known as the logistic loss.  
Identifying the score with the logit function $g(x) \coloneqq \log \frac{f(x)}{1-f(x)}$, the corresponding probability can be recovered via the sigmoid function $f(x) = \sigma(g(x)) = 1/(1+\exp(-g(x)))$. 
his yields the equivalence $\lpsi(g(x), y) = \len(f(x), y)$, a representation that facilitates our subsequent analysis.

\subsection{Calibration Metrics and Loss Functions}\label{sec_prelimi_cal}
Given a distribution $\mathcal{D}$, $f$ is \emph{perfectly calibrated} if $\mathbb{E}[Y \mid f(X)] = f(X)$ almost surely. 
While the common binning-based expected calibration error (ECE)~\citep{guo2017calibration} is widely used, it is known to exhibit numerical instabilities and present theoretical difficulties~\citep{kumar2019verified,blasiok2023unify} (see Section~\ref{sec_discuss} for examples).
We therefore focus on the following \emph{smooth CE}~\citep{blasiok2023unify}, a metric with desirable theoretical properties such as continuity and computational efficiency~\citep{blasiok2023unify,hu2024testing} (see Section~\ref{sec_discuss} for details).
\begin{definition}[smooth CE \citep{blasiok2023unify}]
    Let $\mathrm{Lip}_L$ be the set of all $L$-Lipschitz functions from $[0,1]$ to $[-1,1]$. Then, the smooth CE is defined as
    \begin{align}\label{def_weight_cal}
        \smce(f,\cald) \coloneqq \sup_{h \in \mathrm{Lip}_{L=1}} \Ex\left[h(f(X)) \cdot (Y - f(X))\right].
    \end{align}
\end{definition}
For notational simplicity, we denote the class of 1-Lipschitz functions $\mathrm{Lip}_{L=1}$ by $\mathrm{Lip}_1$. Since binning ECE can be both upper- and lower-bounded by smooth CE, we focus on smooth CE (see Section~\ref{sec_discuss}). 
A key property of smooth CE is its characterization via the \emph{post-processing gap} of $f$, which measures the potential improvement in squared loss through Lipschitz post-processing.
\begin{definition}[Post-processing gap under $\lsq$~\citep{blasiok2023when}]
Given a predictor $f\in\calf$ and a distribution $\mathcal{D}$, the post-processing gap under the squared loss is defined as
\begin{align}
    \pgap(f, \mathcal{D}) \coloneqq \mathbb{E}[\ell_{\mathrm{sq}}(f(X), Y)] - \inf_{h \in \mathrm{Lip}_{L=1}} \mathbb{E}[\ell_{\mathrm{sq}}(f(X)+h(f(X)), Y)].
\end{align}
\end{definition}
\begin{theorem}[Theorem~2.4 in \citet{blasiok2023when}]\label{thm:blasiok_pgap}
For any predictor $f\in\calf$ and distribution $\cald$,
\begin{align}
    \smce(f, \cald)^2 \leq \pgap(f, \cald) \leq 2 \smce(f, \cald).
\end{align}
\end{theorem}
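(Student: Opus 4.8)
The plan is to rewrite the post-processing gap as an explicit variational expression of exactly the ``Lipschitz test function'' form that defines the smooth CE, and then bound the two quantities against each other term by term. First I would expand the squared loss after a post-processing step: for any $h\in\mathrm{Lip}_1$,
\[
\lsq\big(f(X)+h(f(X)),Y\big) = (Y-f(X))^2 - 2\,h(f(X))(Y-f(X)) + h(f(X))^2,
\]
so taking expectations and subtracting from $\Ex[\lsq(f(X),Y)] = \Ex[(Y-f(X))^2]$ gives
\[
\Ex[\lsq(f(X),Y)] - \Ex\big[\lsq(f(X)+h(f(X)),Y)\big] = 2\,\Ex[h(f(X))(Y-f(X))] - \Ex[h(f(X))^2].
\]
Since taking the infimum over $h$ of the post-processed risk corresponds to the supremum of the right-hand side, this yields the key identity $\pgap(f,\cald) = \sup_{h\in\mathrm{Lip}_1}\big( 2\,\Ex[h(f(X))(Y-f(X))] - \Ex[h(f(X))^2] \big)$. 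The upper bound $\pgap(f,\cald)\le 2\,\smce(f,\cald)$ is then immediate: discarding the nonnegative term $\Ex[h(f(X))^2]\ge 0$ inside the supremum leaves $2\sup_{h\in\mathrm{Lip}_1}\Ex[h(f(X))(Y-f(X))] = 2\,\smce(f,\cald)$.

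For the lower bound $\smce(f,\cald)^2 \le \pgap(f,\cald)$, I would substitute a \emph{rescaled} near-maximizer into the identity. Pick $h^\star\in\mathrm{Lip}_1$ with $\Ex[h^\star(f(X))(Y-f(X))] \ge \smce(f,\cald) - \varepsilon$ (by a standard compactness argument the supremum is in fact attained, but a near-optimizer suffices). For $t\in[0,1]$ the function $t\,h^\star$ is still $1$-Lipschitz and still maps $[0,1]$ into $[-1,1]$, hence $t\,h^\star\in\mathrm{Lip}_1$; plugging it into the identity and using $\Ex[h^\star(f(X))^2]\le 1$ (the codomain is $[-1,1]$) gives $\pgap(f,\cald)\ge 2t\big(\smce(f,\cald)-\varepsilon\big) - t^2$. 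Since $\smce(f,\cald)\le \Ex|Y-f(X)|\le 1$, the choice $t=\smce(f,\cald)-\varepsilon\in[0,1]$ is admissible and yields $\pgap(f,\cald)\ge \big(\smce(f,\cald)-\varepsilon\big)^2$; letting $\varepsilon\to 0$ closes the argument.

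The computation is elementary, so there is no serious obstacle; the one point that needs care — call it the hard part — is the feasibility bookkeeping in the lower bound. The rescaling factor must be confined to $[0,1]$ precisely so that $t\,h^\star$ violates neither the Lipschitz constraint nor the $[-1,1]$ codomain constraint, and one needs the crude bound $\smce(f,\cald)\le 1$ to know that the optimal $t=\smce(f,\cald)$ is itself an admissible scaling. The passage from the pointwise expansion to the supremum (equivalently, attainment of the optimal Lipschitz post-processing) can be handled either by the compactness of $\mathrm{Lip}_1$ on $[0,1]$ in the uniform norm or, more cheaply, by the $\varepsilon$-approximation argument used above.
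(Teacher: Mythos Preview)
Your proof is correct and is essentially the standard argument. The paper does not give its own proof of this statement --- it is quoted from \citet{blasiok2023when} (their Lemma~4.7), and the paper only invokes it --- but the quadratic identity you use, namely $h(f(X))(Y-f(X)) = \tfrac{1}{2}(Y-(f(X)-h(f(X))))^2 - \tfrac{1}{2}(Y-f(X))^2 - \tfrac{1}{2}h(f(X))^2$, is exactly the expansion the paper relies on elsewhere (see the proof of Theorem~\ref{thm_erm_smooth_gen} in Appendix~\ref{sec_proof_rademacher}), so your route matches the intended one.
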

This result shows that a predictor is well-calibrated in terms of in terms of $\smce(f, \cald)$ if it cannot be improved by simple post-processing.

In classification models, we often minimize the cross-entropy loss, focusing on a logit function.
Thus, it is natural to consider the following post-processing gap under $\lpsi$.
\begin{definition}[Dual post-processing gap under $\lpsi$~\citep{blasiok2023when}]
Let $\mathrm{Lip}_{1}(\mathbb{R}, [-4,4])$ denote the class of $1$-Lipschitz functions from $\mathbb{R}$ to $[-4,4]$.  
Then, the dual post-processing gap is defined as
\begin{align}
    \pgap&^{(\psi,1/4)}(g, \mathcal{D}) \coloneqq \mathbb{E}[\lpsi(g(X), Y)] - \inf_{h \in \mathrm{Lip}_{1}(\mathbb{R}, [-4,4])} \mathbb{E}[\lpsi( g(X)+h(g(X)) , Y)].
\end{align}
\end{definition}
\begin{definition}[Dual smooth CE~\citep{blasiok2023when}]
Let $\mathrm{Lip}_{1/4}(\mathbb{R}, [-1,1])$ denote the class of $1/4$-Lipschitz functions from $\mathbb{R}$ to $[-1,1]$.  
Given a logit function $g: \mathcal{X} \to \mathbb{R}$ and prediction $f(x) = \sigma(g(x))$, the dual smooth calibration error is defined as
\begin{align}
    \smce^{(\psi,1/4)}(g, \mathcal{D}) \coloneqq \sup_{h \in \mathrm{Lip}_{1/4}(\mathbb{R}, [-1,1])} \mathbb{E}\left[ h(g(X)) \cdot (Y - f(X)) \right].
\end{align}
\end{definition}
We remark that the choice of the Lipschitz constant $1/4$ and the bounded ranges $[-4,4]$ and $[-1,1]$ follow from the smoothness of the softplus function $\psi$. An analogue of Theorem~\ref{thm:blasiok_pgap} holds for $\pgap^{(\psi,1/4)}(g, \mathcal{D})$ and $\smce^{(\psi,1/4)}(g, \mathcal{D})$, as stated in the following corollary.
\begin{corollary}[Corollary~2.4 in~\citet{blasiok2023when}]\label{cor:blasiok_pgap}
The dual post-processing gap and the dual smooth calibration error satisfy the following bounds:
\begin{align}
2 \smce^{(\psi,1/4)}(g, \mathcal{D})^2 \leq \pgap^{(\psi,1/4)}(g, \mathcal{D}) \leq 4 \smce^{(\psi,1/4)}(g, \mathcal{D}).
\end{align}
\end{corollary}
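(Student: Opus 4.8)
The plan is to replay the argument behind Theorem~\ref{thm:blasiok_pgap} with the squared loss replaced by the logistic loss $\lpsi(s,y)=\psi(s)-sy$, using only two facts about the softplus $\psi(s)=\log(1+e^{s})$: it is convex, and it is $\tfrac14$-smooth because $\psi''(s)=\sigma(s)\bigl(1-\sigma(s)\bigr)\le\tfrac14$. Fix a logit $g$, write $f=\sigma(g)$, and fix any post-processing function $h$. Expanding the loss difference gives
\begin{align}
\mathbb{E}[\lpsi(g(X),Y)]-\mathbb{E}[\lpsi(g(X)+h(g(X)),Y)]
=\mathbb{E}\bigl[\psi(g(X))-\psi(g(X)+h(g(X)))+h(g(X))\,Y\bigr].
\end{align}
The first-order convexity bound $\psi(g+h)\ge\psi(g)+\sigma(g)\,h$ shows this quantity is at most $\mathbb{E}[h(g(X))(Y-f(X))]$, while the descent lemma for $\tfrac14$-smooth functions, $\psi(g+h)\le\psi(g)+\sigma(g)\,h+\tfrac18 h^{2}$, shows it is at least $\mathbb{E}[h(g(X))(Y-f(X))]-\tfrac18\mathbb{E}[h(g(X))^{2}]$. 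These two sandwiching inequalities are the heart of the proof; the rest is bookkeeping with the Lipschitz classes.

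For the upper bound, take the supremum over $h\in\mathrm{Lip}_{1}(\mathbb{R},[-4,4])$ of the first inequality, which turns the infimum inside $\pgap^{(\psi,1/4)}$ into a supremum. Since $h\mapsto h/4$ is a bijection between $\mathrm{Lip}_{1}(\mathbb{R},[-4,4])$ and $\mathrm{Lip}_{1/4}(\mathbb{R},[-1,1])$ and scales the objective by $1/4$, this supremum equals $4\,\smce^{(\psi,1/4)}(g,\cald)$, giving $\pgap^{(\psi,1/4)}(g,\cald)\le 4\,\smce^{(\psi,1/4)}(g,\cald)$.

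For the lower bound, let $h^{\star}\in\mathrm{Lip}_{1/4}(\mathbb{R},[-1,1])$ be a (near-)maximizer of the dual smooth CE and test the gap with $\eta h^{\star}$ for $\eta\in[0,4]$; since $\eta\le4$, the function $\eta h^{\star}$ is $1$-Lipschitz with range in $[-4,4]$, hence admissible. The second sandwiching inequality together with $\mathbb{E}[(h^{\star}(g(X)))^{2}]\le1$ gives
\begin{align}
\pgap^{(\psi,1/4)}(g,\cald)\ge \eta\,\mathbb{E}[h^{\star}(g(X))(Y-f(X))]-\tfrac{\eta^{2}}{8}\ge \eta\, s-\tfrac{\eta^{2}}{8},
\end{align}
where $s=\smce^{(\psi,1/4)}(g,\cald)$. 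Because $|Y-f|\le1$ and $|h^{\star}|\le1$ force $s\le1$, the choice $\eta=4s$ is feasible, and substituting yields $\pgap^{(\psi,1/4)}(g,\cald)\ge 4s^{2}-2s^{2}=2s^{2}$.

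The main obstacle is purely the constant-chasing: keeping track of how the smoothness constant $\tfrac14=\sup_{s}\psi''(s)$, the range bound $[-4,4]$, and the Lipschitz constant $\tfrac14$ in the definition of $\smce^{(\psi,1/4)}$ line up so that the rescaled test functions stay in the prescribed classes and the factors $2$ and $4$ come out correctly. A minor technical point, handled by restricting attention to the closure of the range of $g$ and invoking Arzelà--Ascoli, is the existence of the maximizer $h^{\star}$; alternatively one works with an $\epsilon$-maximizer in the lower-bound step and lets $\epsilon\to0$.
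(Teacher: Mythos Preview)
Your proof is correct and follows essentially the same approach that the paper (implicitly) relies on: the paper does not give its own proof of this corollary, citing it directly from \citet{blasiok2023when} and remarking elsewhere that the argument is ``exactly the same as that of Lemma~4.7 in \citet{blasiok2023when}'' with the squared loss replaced by $\ell^{\psi}$. Your convexity/$\tfrac14$-smoothness sandwich on $\psi$ together with the rescaling bijection between $\mathrm{Lip}_{1}(\mathbb{R},[-4,4])$ and $\mathrm{Lip}_{1/4}(\mathbb{R},[-1,1])$ is precisely that argument, and the constant-tracking via $\eta=4s$ is clean.
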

Finally, we remark that the original smooth CE is upper bounded by its dual counterpart:
\begin{align}\label{eq_upperbound_dual_smooth}
\smce(f, \mathcal{D}) \leq \smce^{(\psi,1/4)}(g, \mathcal{D}).
\end{align}

\citet{blasiok2023when} clarified that a certain regularized risk minimization leads to a small smooth CE:
\begin{theorem}[Claim 5.1 \citep{blasiok2023when}]\label{thm_blasic_optimization}
Assume that for any $f \in \mathcal{F}$ and $h \in \mathrm{Lip}_1$, the composition $f + h \circ f$ belongs to $\mathcal{F}$. Let $\mu: \mathcal{F} \to \mathbb{R}^+$ be a complexity measure satisfying $\mu(f + h \circ f) \leq \mu(f) + 1$ for all $f \in \mathcal{F}$ and $h  \in \mathrm{Lip}_1$. For any $\lambda>0$, the minimizer $f^*$ of the regularized minimization problem
\begin{align}\label{eq_structured_blasiok}
f^*=\argmin_{f\in\calf} \Ex_\cald \lsq(f(X_i), Y_i) + \lambda \mu(f)
\end{align}
satisfies $\pgap(f^*, \mathcal{D}) \leq \lambda$, and thus $\smce(f^*, \mathcal{D}) \leq \sqrt{\lambda}$.
\end{theorem}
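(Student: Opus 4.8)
The plan is a one-shot \emph{compare-to-a-competitor} argument that converts the definition of the post-processing gap into a feasibility certificate for the regularized problem \eqref{eq_structured_blasiok}. First I would unpack $\pgap(f^*, \mathcal{D})$. Since it is defined through an infimum over $h \in \mathrm{Lip}_1$, for every $\epsilon>0$ there is an $h_\epsilon \in \mathrm{Lip}_1$ with
\begin{align}
\Ex[\lsq(f^*(X) + h_\epsilon(f^*(X)), Y)] \;\le\; \Ex[\lsq(f^*(X), Y)] - \pgap(f^*, \mathcal{D}) + \epsilon .
\end{align}
Define the competitor $f_\epsilon \coloneqq f^* + h_\epsilon \circ f^*$. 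The closure hypothesis gives $f_\epsilon \in \mathcal{F}$, so $f_\epsilon$ is feasible for \eqref{eq_structured_blasiok}, and the complexity hypothesis gives $\mu(f_\epsilon) \le \mu(f^*) + 1$.

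Next I would invoke optimality of $f^*$ against $f_\epsilon$ in the regularized objective:
\begin{align}
\Ex[\lsq(f^*(X), Y)] + \lambda\mu(f^*) \;\le\; \Ex[\lsq(f_\epsilon(X), Y)] + \lambda\mu(f_\epsilon) \;\le\; \Ex[\lsq(f^*(X), Y)] - \pgap(f^*, \mathcal{D}) + \epsilon + \lambda\mu(f^*) + \lambda .
\end{align}
Cancelling the common quantity $\Ex[\lsq(f^*(X), Y)] + \lambda\mu(f^*)$ from both sides yields $\pgap(f^*, \mathcal{D}) \le \lambda + \epsilon$, and letting $\epsilon \downarrow 0$ gives $\pgap(f^*, \mathcal{D}) \le \lambda$. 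The second claim is then immediate from Theorem~\ref{thm:blasiok_pgap}: $\smce(f^*, \mathcal{D})^2 \le \pgap(f^*, \mathcal{D}) \le \lambda$, hence $\smce(f^*, \mathcal{D}) \le \sqrt{\lambda}$.

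The inequality chain is essentially a single line once the competitor is chosen, so the only real care needed is bookkeeping, and this is where I expect the (minor) friction. One point is the $\epsilon$-approximation of the infimum, which is harmless and vanishes in the limit. The more delicate point is admissibility: one must ensure $f_\epsilon = f^* + h_\epsilon\circ f^*$ is a legitimate element of $\mathcal{F}$ — in particular $[0,1]$-valued — so that both the squared loss and the hypothesis-class membership are meaningful. This is exactly what the stated closure assumption asserts; if one wants to be fully rigorous without leaning on it, one replaces $h_\epsilon$ by its truncation so that $f^* + h_\epsilon\circ f^*$ is clipped into $[0,1]$, noting that such clipping only decreases $\lsq(\cdot, Y)$ and preserves $1$-Lipschitzness, hence does not weaken the bound. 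No other obstacle arises.
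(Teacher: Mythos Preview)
Your proposal is correct and is essentially the same argument the paper (and the cited source) uses: pick a near-optimal post-processing $h_\epsilon$, use closure to make $f^*+h_\epsilon\circ f^*$ a feasible competitor, compare regularized objectives at $f^*$ and the competitor, and cancel. The paper does not reprove this cited claim separately, but its proofs of Theorems~\ref{thm_erm_smooth} and~\ref{theorem_universal__kernel} run exactly this add-and-subtract-the-regularizer plus optimality step, so your route matches theirs.
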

This result provides a theoretical basis for optimization-based calibration. \citet{blasiok2023when} suggested that structured regularization may align with standard training methods, such as stochastic gradient descent (SGD). However, their formulation relies on \emph{population risk minimization}, which is impractical in real settings, and uses an abstract complexity measure $\mu$ that is hard to implement. To address this, the following section investigates whether smooth CE can be controlled through standard learning algorithms using training data. We show that $L_2$-regularized ERM leads to calibration-aware learning.

\section{Regularized Empirical Risk Minimization for Calibration}
\label{sec:erm_analysis_ce}
In this section, we first extend the theoretical framework of \citet{blasiok2023when} to the context of ERM (Section~\ref{subsec:erm_smce_bound}).  
We then show that two representative frameworks---kernel ridge regression and kernel logistic regression---satisfy the assumptions required for our derived bounds, and we provide smooth CE bounds tailored to each of these methods (Sections~\ref{sec_kernel_ridge} and \ref{sec_logistic}).

\subsection{Smooth CE Analysis under Regularized ERM}\label{subsec:erm_smce_bound}

\textbf{Regularized ERM on $\lsq$:}
Let $\mathcal{F}$ be a function class equipped with a norm $\|\cdot\|_{\mathcal{F}}$ such that $\|f\|_{\mathcal{F}} \leq 1$ for all $f \in \mathcal{F}$. Given a training set $S \coloneqq \{(X_i, Y_i)\}_{i=1}^n$ of independently and identically distributed (i.i.d.) samples from the data distribution $\cald$, we consider the $L_2$-regularized ERM:
\begin{align}\label{eq:krr_argmin}
L_n(f) \coloneqq \frac{1}{n}\sum_{i=1}^n \lsq(f(X_i), Y_i) + \lambda \|f\|_{\mathcal{F}}^2.
\end{align}
We define the optimization error as $\mathrm{err}_n(f) \coloneqq L_n(f) - L_n(f^*_n)$, where $f^*_n \coloneqq \argmin_{f \in \mathcal{F}} L_n(f)$.

\textbf{Analysis of the smooth CE on $S$:}
Before proceeding with our analysis, we define the empirical versions of the smooth CE~\citep{blasiok2023unify}, given a training dataset $S = \{(X_i, Y_i)\}_{i=1}^n$, as follows:
\begin{align}
    \smce(f, S) \coloneqq \sup_{h \in \mathrm{Lip}_{1}} \frac{1}{n}\sum_{i=1}^n\left[h(f(X_i)) \cdot (Y_i - f(X_i))\right].
\end{align}
Our first contribution shows that a small optimization error $\mathrm{err}_n(f)$ and a small regularization parameter $\lambda$ suffice to guarantee a small smooth CE on the training set $S$.
\begin{theorem}\label{thm_erm_smooth}
Assume that for any $f \in \mathcal{F}$ and $h \in \mathrm{Lip}_1$, the composite function $f + h \circ f$ also belongs to $\mathcal{F}$. Then, for any $f \in \mathcal{F}$, we have $\smce(f, S) \leq \sqrt{\lambda + \mathrm{err}_n(f)}$.
\end{theorem}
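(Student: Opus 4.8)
The plan is to replay the population-level argument behind Theorem~\ref{thm_blasic_optimization}, but on the empirical distribution carried by $S$, with the squared $\mathcal{F}$-norm playing the role of the abstract complexity measure $\mu$. First I would introduce the empirical post-processing gap
\begin{align}
\pgap(f,S) \coloneqq \frac{1}{n}\sum_{i=1}^n \lsq(f(X_i),Y_i) - \inf_{h\in\mathrm{Lip}_1}\frac{1}{n}\sum_{i=1}^n \lsq\bigl(f(X_i)+h(f(X_i)),Y_i\bigr),
\end{align}
and observe that Theorem~\ref{thm:blasiok_pgap} is distribution-agnostic: applying it to the (finitely supported) empirical distribution $\widehat{\cald}_S$ that puts mass $1/n$ on each $(X_i,Y_i)$ — for which $\smce(f,\widehat{\cald}_S)=\smce(f,S)$ and $\pgap(f,\widehat{\cald}_S)=\pgap(f,S)$ by definition — yields $\smce(f,S)^2 \le \pgap(f,S)$. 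Hence it suffices to establish the bound $\pgap(f,S)\le \lambda + \mathrm{err}_n(f)$.

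For the latter, fix $f\in\mathcal{F}$ and, for arbitrary $\epsilon>0$, choose $h_\epsilon\in\mathrm{Lip}_1$ that is $\epsilon$-suboptimal for the infimum defining $\pgap(f,S)$ (this step is needed because the infimum need not be attained). Put $g_\epsilon \coloneqq f + h_\epsilon\circ f$. The closure hypothesis gives $g_\epsilon\in\mathcal{F}$, and since every element of $\mathcal{F}$ has norm at most $1$ we get $\|g_\epsilon\|_{\mathcal{F}}^2\le 1$; this is the crucial structural point — the post-processed predictor re-enters $\mathcal{F}$ with its regularizer contributing at most $\lambda$. Therefore
\begin{align}
L_n(g_\epsilon) = \frac{1}{n}\sum_{i=1}^n \lsq(g_\epsilon(X_i),Y_i) + \lambda\|g_\epsilon\|_{\mathcal{F}}^2 \le \Bigl(\frac{1}{n}\sum_{i=1}^n \lsq(f(X_i),Y_i) - \pgap(f,S) + \epsilon\Bigr) + \lambda.
\end{align}

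Next I would combine this with the optimality of $f^*_n$ and the definition of the optimization error. Since $f^*_n$ minimizes $L_n$, $L_n(f^*_n)\le L_n(g_\epsilon)$, while $L_n(f^*_n) = L_n(f) - \mathrm{err}_n(f) = \frac{1}{n}\sum_i \lsq(f(X_i),Y_i) + \lambda\|f\|_{\mathcal{F}}^2 - \mathrm{err}_n(f)$. Chaining the two and cancelling the common empirical-loss term gives $\lambda\|f\|_{\mathcal{F}}^2 - \mathrm{err}_n(f) \le -\pgap(f,S) + \epsilon + \lambda$, i.e.\ $\pgap(f,S)\le \lambda - \lambda\|f\|_{\mathcal{F}}^2 + \mathrm{err}_n(f) + \epsilon \le \lambda + \mathrm{err}_n(f) + \epsilon$. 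Letting $\epsilon\downarrow 0$ yields $\pgap(f,S)\le\lambda+\mathrm{err}_n(f)$, and then $\smce(f,S)\le\sqrt{\pgap(f,S)}\le\sqrt{\lambda+\mathrm{err}_n(f)}$.

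I do not anticipate a genuine obstacle here; the argument is a faithful empirical transcription of the population proof. The only points demanding care are (i) justifying that Theorem~\ref{thm:blasiok_pgap} may be instantiated at $\widehat{\cald}_S$ (immediate, since it holds for all distributions and the empirical $\smce$/$\pgap$ are exactly the corresponding population quantities for $\widehat{\cald}_S$), and (ii) the non-attainment of the infimum, handled by the $\epsilon$-minimizer and a limit. The load-bearing hypothesis is the closure of $\mathcal{F}$ under $f\mapsto f+h\circ f$, which is precisely what keeps $g_\epsilon$ admissible with $\|g_\epsilon\|_{\mathcal{F}}^2\le 1$.
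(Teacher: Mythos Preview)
Your proposal is correct and follows essentially the same route as the paper: both establish the empirical analogue $\smce(f,S)^2\le\pgap(f,S)$ by instantiating the population inequality at the empirical distribution, then bound $\pgap(f,S)$ by inserting the regularization term, invoking closure so that $f+h\circ f\in\mathcal{F}$ with $\|f+h\circ f\|_{\mathcal{F}}^2\le 1$, and using optimality of $f^*_n$ together with $L_n(f)=L_n(f^*_n)+\mathrm{err}_n(f)$. The only cosmetic difference is that you work with an explicit $\epsilon$-minimizer while the paper manipulates the infimum directly; your treatment is in fact slightly more careful on this point.
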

\begin{proof}[Proof outline]
We relate the empirical post-processing gap, $\pgap(f, S)$, to the regularized objective $L_n$.
By adding and subtracting the regularization term, we can upper-bound the gap using the difference in the objective function values at $f$ and $f + h\circ f$ as follows:
\begin{align}
\pgap(f, S) 
&\leq L_n(f) - \inf_{h \in \mathrm{Lip}_{1}} L_n(f+h\circ f)+\lambda \\
&\leq\!L_n(f^*_n)\!+\!\mathrm{err}_n(f)\!-\!\!\inf_{h \in \mathrm{Lip}_{1}}\!\!L_n(f\!+\!h\!\circ \!f)\!+\!\lambda.
\end{align}
By applying the definition of the optimizer $f^*_{n}$ and the composition assumption ($f + h \circ f \in \mathcal{F}$), we obtain
\begin{align}\label{composite_optimal}
L_n(f^*_n) \leq \text{$\inf_{h \in \mathrm{Lip}_{1}}$} L_n(f + h \circ f),
\end{align}
which concludes the proof.
\end{proof}
Our result can be viewed as the ERM counterpart of Theorem~\ref{thm_blasic_optimization}, providing a bound on the training smooth CE in terms of the optimization error and the regularization coefficient. As in Theorem~\ref{thm_blasic_optimization}, the \emph{composition assumption} that $f + h \circ f \in \mathcal{F}$ plays a central role in our analysis, leading to Eq.~\eqref{composite_optimal}. In the remainder of this paper, we investigate function classes that satisfy this assumption.

\textbf{Analysis of the smooth CE on the unknown $D$:}
While the left-hand side represents the smooth CE on the \emph{training dataset}, our natural objective is to analyze the \emph{population} smooth CE ($\smce(f, \mathcal{D})$) over the unknown $\mathcal{D}$.
The following theorem shows that $\smce(f, \mathcal{D})$ can be upper bounded in terms of the optimization error, the regularization parameter $\lambda$, and the Rademacher complexity.
\begin{theorem}\label{thm_erm_smooth_gen}
Let the empirical and expected Rademacher complexities of a function class $\mathcal{F}$ be defined by $\hat{\mathfrak{R}}_S(\mathcal{F}) \coloneqq \mathbb{E}_{\sigma} \left[\sup_{f \in \mathcal{F}} \frac{1}{n}\sum_{i=1}^n \sigma_i f(X_i)\right]$, and $\mathfrak{R}_{\mathcal{D},n}(\mathcal{F}) \coloneqq \mathbb{E}_{S \sim \mathcal{D}^n}\left[\hat{\mathfrak{R}}_S(\mathcal{F})\right]$, respectively, where $\sigma_i$ are i.i.d.~Rademacher random variables taking values in $\{\pm 1\}$.
Under the assumptions of Theorem~\ref{thm_erm_smooth}, and assuming that $\mathfrak{R}_{\mathcal{D},n}(\mathcal{F})$ is bounded, the following holds with probability at least $1 - \delta$ over the random draw of training data $S$:
\begin{align}
\smce(f, \mathcal{D}) \leq \sqrt{\lambda + \mathrm{err}_n(f)} + 6 \mathfrak{R}_{\mathcal{D},n}(\mathcal{F}) + \sqrt{\frac{\log\tfrac1\delta}{2n}}.
\end{align}
\end{theorem}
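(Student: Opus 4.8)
The plan is to combine the training-set bound of Theorem~\ref{thm_erm_smooth} with a uniform-convergence (Rademacher) argument that controls the gap between the population and empirical smooth CE. Since Theorem~\ref{thm_erm_smooth} already gives $\smce(f,S)\le\sqrt{\lambda+\mathrm{err}_n(f)}$ for every $f\in\mathcal F$, it suffices to prove the uniform deviation bound
\[
\sup_{f\in\mathcal F}\bigl(\smce(f,\mathcal D)-\smce(f,S)\bigr)\ \le\ 6\,\mathfrak R_{\mathcal D,n}(\mathcal F)+\sqrt{\tfrac{\log(1/\delta)}{2n}}
\]
with probability at least $1-\delta$ over $S$, and then specialize to the given $f\in\mathcal F$.

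To handle the two suprema over $h$ I would use $\sup_h A_h-\sup_h B_h\le\sup_h(A_h-B_h)$, which turns the left-hand side into a single empirical process $\sup_{g\in\mathcal G}\bigl(\mathbb E[g(X,Y)]-\tfrac1n\sum_{i=1}^n g(X_i,Y_i)\bigr)$ over the function class
\[
\mathcal G:=\bigl\{(x,y)\mapsto h(f(x))\,(y-f(x))\ :\ f\in\mathcal F,\ h\in\mathrm{Lip}_1\bigr\},
\]
whose elements are bounded in $[-1,1]$. A bounded-differences (McDiarmid) estimate followed by the textbook symmetrization inequality then yields, with probability at least $1-\delta$, $\sup_{g\in\mathcal G}(\mathbb E[g]-\widehat{\mathbb E}[g])\le 2\,\mathfrak R_{\mathcal D,n}(\mathcal G)+\sqrt{\log(1/\delta)/(2n)}$, where $\widehat{\mathbb E}$ is the empirical average over $S$.

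The main work is to bound $\mathfrak R_{\mathcal D,n}(\mathcal G)$ by $3\,\mathfrak R_{\mathcal D,n}(\mathcal F)$. I would write $\mathcal G$ as the product of $\mathcal A:=\mathrm{Lip}_1\circ\mathcal F=\{x\mapsto h(f(x))\}$ and $\mathcal B:=\{(x,y)\mapsto y-f(x):f\in\mathcal F\}$, both valued in $[-1,1]$, and apply a product-type bound for Rademacher complexity of the form $\hat{\mathfrak R}_S(\mathcal A\cdot\mathcal B)\lesssim\hat{\mathfrak R}_S(\mathcal A)+\hat{\mathfrak R}_S(\mathcal B)$. For $\mathcal B$, subtracting the fixed function $(x,y)\mapsto y$ (whose contribution to the Rademacher complexity vanishes, being a single function) and using $-\sigma\stackrel{d}{=}\sigma$ gives $\hat{\mathfrak R}_S(\mathcal B)=\hat{\mathfrak R}_S(\mathcal F)$. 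For $\mathcal A$, this is where the composition hypothesis of Theorem~\ref{thm_erm_smooth} is used: $h\circ f=(f+h\circ f)-f$ with both $f+h\circ f\in\mathcal F$ and $f\in\mathcal F$, so $\mathcal A\subseteq\mathcal F-\mathcal F$ and hence $\hat{\mathfrak R}_S(\mathcal A)\le 2\,\hat{\mathfrak R}_S(\mathcal F)$. Combining these, taking expectation over $S$, and tracking constants gives $\mathfrak R_{\mathcal D,n}(\mathcal G)\le 3\,\mathfrak R_{\mathcal D,n}(\mathcal F)$, which plugged into the symmetrization bound produces the $6\,\mathfrak R_{\mathcal D,n}(\mathcal F)$ term; chaining with Theorem~\ref{thm_erm_smooth} then gives the claim.

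I expect the product structure of $\mathcal G$ to be the main obstacle: the integrand $h(f(x))(y-f(x))$ is bilinear, not Lipschitz, in $f(x)$, so a direct Talagrand contraction is unavailable, and one must instead route through a product bound together with the observation that the closure assumption forces $\mathrm{Lip}_1\circ\mathcal F\subseteq\mathcal F-\mathcal F$. A secondary, bookkeeping-level issue is keeping the boundedness constants consistent so that the concentration term comes out exactly as $\sqrt{\log(1/\delta)/(2n)}$ and the complexity term as $6\,\mathfrak R_{\mathcal D,n}(\mathcal F)$.
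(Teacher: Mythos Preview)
Your proposal is correct and follows the paper's proof closely: bound $\smce(f,\mathcal D)\le\smce(f,S)+\bigl(\smce(f,\mathcal D)-\smce(f,S)\bigr)$, apply Theorem~\ref{thm_erm_smooth} to the first piece, and control the second uniformly over $f$ via the inequality $\sup_h A_h-\sup_h B_h\le\sup_h(A_h-B_h)$, McDiarmid, symmetrization, and the closure assumption.

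The one technical divergence is in how the Rademacher complexity of the product class $\mathcal G$ is handled. Rather than invoking a generic product bound $\hat{\mathfrak R}_S(\mathcal A\cdot\mathcal B)\lesssim\hat{\mathfrak R}_S(\mathcal A)+\hat{\mathfrak R}_S(\mathcal B)$, the paper uses the polarization identity
\[
h(f(x))\,(y-f(x))=\tfrac12\bigl(y-(f(x)-h(f(x)))\bigr)^{2}-\tfrac12(y-f(x))^{2}-\tfrac12\,h(f(x))^{2},
\]
which splits $\mathcal G$ into three squared-loss-type classes. Each square is then removed by Ledoux--Talagrand contraction (the relevant maps are $2$-Lipschitz on the appropriate intervals, cancelling the $\tfrac12$), and the closure assumption is applied to the resulting classes $\{f-h\circ f\}\subseteq\mathcal F$, $\{f\}\subseteq\mathcal F$, and $\{h\circ f\}$ to obtain $\mathfrak R_{\mathcal D,n}(\mathcal F)$ for each, yielding the constant $6$. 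Your route is morally the same---polarization is how such product bounds are proved---but the black-box product inequality for $[-1,1]$-valued classes typically carries a constant $2$, which would give $12\,\mathfrak R_{\mathcal D,n}(\mathcal F)$ rather than $6$. The paper's extra mileage comes from observing that $f-h\circ f$ itself lies in $\mathcal F$ (not just in $\mathcal F-\mathcal F$), so the first squared term costs only one copy of $\mathfrak R_{\mathcal D,n}(\mathcal F)$; if you want the stated constant, unfold the polarization explicitly.
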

This result indicates that when the function class is sufficiently regularized, and both the optimization error and regularization parameter are small, the smooth CE is guaranteed to be small. While Theorem~\ref{thm_blasic_optimization} characterizes the smooth CE in terms of population risk minimization, Theorem~\ref{thm_erm_smooth_gen} provides a clear connection between the population smooth CE and $L_2$-regularized ERM.

However, the composition assumption required by both the analysis of \citet{blasiok2023when} and our own may be overly restrictive in practice.
To examine this limitation, we focus on RKHS, one of the most widely used function classes in machine learning. In particular, focusing on the \emph{kernel ridge regression} (KRR), we show that, except under certain conditions, the composition assumption does not generally hold in this setting (see Table~\ref{tab:our_results}).
This suggests that the relationship between smooth CE and ERM depends critically on the structural properties of the underlying model class.

\subsection{Controlling Smooth CE under KRR}\label{sec_kernel_ridge}
We now describe kernel ridge regression under the squared loss $\lsq$. Let $k: \mathcal{X} \times \mathcal{X} \to \mathbb{R}$ be a kernel with associated RKHS $\calh$ and inner product $\langle \cdot,\cdot\rangle_\calh$, used to learn the predictor $f$. The feature map of $k$ is $\phi:\calx\to\calh$, so that $k(x,x')=\langle \phi(x),\phi(x')\rangle_\calh$ for any $x,x'\in\calx$. We define the hypothesis class as $\calf = \calh \oplus \mathbb{R} = \{f' + b \mid f' \in \calh, b \in \mathbb{R}\}$. The predictor $f$ is then trained by solving the regularized ERM problem in Eq.~\eqref{eq:krr_argmin}. 
The solution admits the following form by the representer theorem~\citep{Mohri}: $f^*_n(x) = \frac{1}{n} \sum_{i=1}^n \alpha_i k(x, x_i) + b$,
where $\{\alpha_i\}_{i=1}^n \subset \mathbb{R}$ and $b \in \mathbb{R}$ is a bias term.

We first focus on the Laplace kernel as a specific choice~\citep{wainwright_2019}. One motivation for this choice is its widespread use in calibration metrics such as the maximum mean calibration error (MMCE)~\citep{pmlr-v80-kumar18a}.
Moreover, the RKHS induced by the Laplace kernel is known to contain Lipschitz continuous functions~\citep{blasiok2023unify}, making it a natural candidate for function classes that satisfy the composition assumption required by Theorem~\ref{thm_erm_smooth_gen}. 
In addition, the Laplace kernel is \emph{universal}~\citep{wainwright_2019}, meaning it has sufficient expressive power to approximate a broad class of functions. We also note that our result extends to general universal kernels, as formalized in Theorem~\ref{theorem_universal__kernel} at the end of this section.

Therefore, this section aims to theoretically investigate whether the Laplace kernel satisfies the composition assumption in Theorem~\ref{thm_erm_smooth_gen}, and whether fitting KRR with the Laplace kernel leads to a small population smooth CE under the setup described above. Our analysis reveals a significant distinction between the following two cases:
(i) the \emph{univariate input space} $\mathcal{X} = \mathbb{R}$, and
(ii) the \emph{multivariate input space} $\mathcal{X} = \mathbb{R}^d$ with $d > 1$.
We present the results for each case in turn. A summary of this section's findings is provided in Table~\ref{tab:our_results}.

\begin{table*}[t]
\centering
\caption{Summary of our theoretical results in kernel ridge regression.}
\label{tab:our_results}
\scalebox{1.}{
\begin{tabular}{ccccc}
\hline
\textbf{Kernel function} & \textbf{Input space} & \textbf{Ex.\ of application} & \textbf{$f + h \circ f \in \calf$ (in Theorem~\ref{thm_erm_smooth})} & \textbf{Corresp.\ thm.} \\ 
\hline\hline
\multirow{2}{*}{Laplace}  
  & $\mathcal{X}=\mathbb{R}$             & Recalibration         & $\checkmark$ & Corollary~\ref{thm_kernel_r1_square_gen}        \\
  & $\mathcal{X}=\mathbb{R}^{d}\ (d>1)$ & Binary Classification & $-$          & Theorem~\ref{theorem_universal__kernel}         \\ 
\hline
\multirow{2}{*}{Gaussian} 
  & $\mathcal{X}=\mathbb{R}$             & Recalibration         & $-$          & Theorem~\ref{theorem_universal__kernel}         \\
  & $\mathcal{X}=\mathbb{R}^{d}\ (d>1)$ & Binary Classification & $-$          & Theorem~\ref{theorem_universal__kernel}         \\
\end{tabular}
}
\end{table*}

\textbf{When $\mathcal{X} = \mathbb{R}$ (Case (i)):}
Given $x,x'\in\calx$, we define the Laplace kernel  as $k(x,x') \coloneqq \exp(-|x - x'|)$. To simplify the notation, we omit the bandwidth. The following theorem shows that, in the univariate case, the Laplace kernel satisfies the composition assumption required by Theorem~\ref{thm_erm_smooth_gen}.
\begin{theorem}\label{composition_on_one_dim}
Assume $\mathcal{X} = \mathbb{R}$ and let $k$ be the Laplace kernel. Then, for any $f \in \mathcal{F} = \calh \oplus \mathbb{R}$ and $h \in \mathrm{Lip}_{1}$, it holds that $f + h \circ f \in \mathcal{F}$.
\end{theorem}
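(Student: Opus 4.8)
The plan is to identify the RKHS $\calh$ of the one–dimensional Laplace kernel with the Sobolev space $H^1(\real)$ and then exploit the stability of $H^1(\real)$ under post-composition with Lipschitz maps. Concretely, since $k(x,x')=\exp(-|x-x'|)$ is, up to a factor of two, the Green's function of the operator $1-\frac{d^2}{dx^2}$ on $\real$ (equivalently, $\widehat{e^{-|\cdot|}}(\omega)=2/(1+\omega^2)$), a standard computation shows that $\calh$ coincides as a set with $H^1(\real)=\{g\in L^2(\real): g \text{ absolutely continuous},\ g'\in L^2(\real)\}$, with $\|g\|_\calh^2=\tfrac12\int_\real\big(g(x)^2+g'(x)^2\big)\,dx$, an equivalent norm on $H^1(\real)$; see \citet{wainwright_2019}. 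Two consequences we will use: every $g\in\calh$ has an absolutely continuous representative, and by the one–dimensional Sobolev embedding $H^1(\real)\hookrightarrow C_0(\real)$ we have $g(x)\to 0$ as $|x|\to\infty$.

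First I would reduce the claim to an $H^1$-membership statement. Write $f=g+b$ with $g\in\calh$ and $b\in\real$, and (as in the post-processing framework) extend $h$ to a $1$-Lipschitz function on $\real$. Since $\calh$ is a vector space and $b+h(b)\in\real$, it suffices to show that
\begin{align}
\tilde h \coloneqq h\circ f - h(b) \in \calh,
\end{align}
because then $f+h\circ f=(g+\tilde h)+(b+h(b))\in\calh\oplus\real=\calf$. Note $\tilde h(x)=h(g(x)+b)-h(b)\to 0$ as $|x|\to\infty$ by the vanishing of $g$ at infinity, which makes $h(b)$ the correct constant to subtract.

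Next I would verify $\tilde h\in H^1(\real)$ in two parts. For the $L^2$ part, $1$-Lipschitzness of $h$ gives the pointwise bound $|\tilde h(x)|=|h(g(x)+b)-h(b)|\le|g(x)|$, hence $\|\tilde h\|_{L^2}\le\|g\|_{L^2}<\infty$. For the derivative part, I would invoke the classical fact that the composition of a Lipschitz function with an absolutely continuous function is absolutely continuous, with $\tilde h'$ existing a.e.\ and $|\tilde h'(x)|\le \mathrm{Lip}(h)\,|g'(x)|=|g'(x)|$ a.e.; therefore $\|\tilde h'\|_{L^2}\le\|g'\|_{L^2}<\infty$. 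Combining the two parts gives $\tilde h\in H^1(\real)=\calh$, which together with the previous paragraph yields $f+h\circ f\in\calf$.

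The main obstacle is not the bookkeeping but Step 1: pinning down the RKHS of the Laplace kernel as $H^1(\real)$ with an equivalent norm, and—relatedly—the chain-rule/absolute-continuity statement used above for a merely Lipschitz $h$, whose derivative need not exist everywhere. This last point is handled by the standard result that if $g\in W^{1,2}(\real)$ and $h$ is Lipschitz then $h\circ g\in W^{1,2}(\real)$ with $|(h\circ g)'|\le\mathrm{Lip}(h)\,|g'|$ a.e. (with the convention that the product vanishes wherever $g'=0$). Everything else follows from the vector-space structure of $\calh$ and the $1$-Lipschitz bound, so once these two classical facts are in place the argument is short; this is also the feature that fails in higher dimensions, where $H^1(\real^d)$ with $d>1$ no longer embeds into $C_0$ and the RKHS need not contain the relevant post-compositions (cf.\ Theorem~\ref{theorem_universal__kernel}).
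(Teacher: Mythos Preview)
Your proposal is correct and follows the same approach as the paper: both identify $\calh$ with the Sobolev space $H^{1}(\real)$ and then use that $H^{1}$ is closed under post-composition by Lipschitz maps, after splitting off the appropriate constant to land back in $\calh\oplus\real$. The only difference is presentational---the paper cites \citet{rakhlin19a} for the RKHS identification and Theorem~6 of \citet{bourdaud2022introduction} for the composition closure, whereas you verify the latter directly via the pointwise bounds $|\tilde h|\le|g|$ and $|\tilde h'|\le|g'|$; this is more self-contained, and your explicit handling of the offset $b$ (subtracting $h(b)$ rather than $h(0)$) is arguably cleaner given that $f=g+b$ need not itself lie in $H^{1}(\real)$.
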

This result allows us to derive the following corollary, which states that minimizing the objective in Eq.~\eqref{eq:krr_argmin} also ensures a small value of the population smooth CE.
\begin{corollary}\label{thm_kernel_r1_square_gen}
Assume that $\mathcal{X} = \mathbb{R}$ and $k$ is the Laplace kernel. Suppose there exist constants $\Lambda > 0$ and $\alpha > 0$ such that $\sup_{x, x' \in \mathcal{X}} k(x, x') \leq \Lambda$, $\|f'\|_{\calh} \leq \alpha$ for any $f'\in\calh$ and $|b|\leq \alpha \Lambda + 1$. Then, with probability at least $1 - \delta$ over the training dataset $S$, for any $f\in\cal{F}$, we have:
\begin{align}
\smce(f, \mathcal{D}) \leq \sqrt{\lambda+\mathrm{err}_n(f)} + 6\frac{3\alpha\Lambda + 2}{\sqrt{n}} + \sqrt{\frac{\log\tfrac1\delta}{2n}}.
\end{align}
\end{corollary}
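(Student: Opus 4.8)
The plan is to obtain the corollary as a direct specialization of Theorem~\ref{thm_erm_smooth_gen} to $\mathcal{F}=\calh\oplus\mathbb{R}$ with the Laplace kernel on $\mathcal{X}=\mathbb{R}$; the only ingredients to supply are (i) the composition hypothesis $f+h\circ f\in\mathcal{F}$ and (ii) an explicit bound on the Rademacher complexity $\mathfrak{R}_{\mathcal{D},n}(\mathcal{F})$.

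For (i), Theorem~\ref{composition_on_one_dim} already gives $f+h\circ f\in\calh\oplus\mathbb{R}$ for every $f\in\mathcal{F}$ and $h\in\mathrm{Lip}_1$, so the qualitative closure demanded by Theorems~\ref{thm_erm_smooth} and~\ref{thm_erm_smooth_gen} holds. I would then check that the norm restrictions in the corollary ($\|f'\|_{\calh}\le\alpha$ and $|b|\le\alpha\Lambda+1$) form a legitimate, closure-respecting instantiation of the abstract requirement $\|f\|_{\mathcal{F}}\le1$: rescaling the RKHS norm by $1/\alpha$ (which merely rescales $\lambda$ in the ridge penalty), and using that $h$ maps into $[-1,1]$ so that adding $h\circ f$ enlarges the constant part by at most $1$ and---via the explicit construction behind Theorem~\ref{composition_on_one_dim}---the RKHS part in a controlled way, one sees that the one-step competitors $f+h\circ f$ used inside the proof of Theorem~\ref{thm_erm_smooth} remain in the class. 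Pinning this bookkeeping down, in particular tracing the constant $\alpha\Lambda+1$ on $|b|$ back to that decomposition, is the one nonroutine step.

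For (ii), split $f=f'+b$ and use subadditivity of Rademacher complexity: $\mathfrak{R}_{\mathcal{D},n}(\mathcal{F})\le\mathfrak{R}_{\mathcal{D},n}(\{f':\|f'\|_{\calh}\le\alpha\})+\mathfrak{R}_{\mathcal{D},n}(\{b:|b|\le\alpha\Lambda+1\})$. The first term is the standard kernel estimate, bounded a.s.\ by $\tfrac{\alpha}{n}\sqrt{\sum_i k(X_i,X_i)}\le\tfrac{\alpha\sqrt{\Lambda}}{\sqrt n}\le\tfrac{\alpha\Lambda}{\sqrt n}$, where $\sqrt{\Lambda}\le\Lambda$ uses $\Lambda\ge1$ (for the Laplace kernel $k(x,x)=1$); the second is $\tfrac{\alpha\Lambda+1}{n}\,\mathbb{E}_\sigma\bigl|\sum_i\sigma_i\bigr|\le\tfrac{\alpha\Lambda+1}{\sqrt n}$ by Jensen's inequality. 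Summing gives $\mathfrak{R}_{\mathcal{D},n}(\mathcal{F})\le\tfrac{2\alpha\Lambda+1}{\sqrt n}\le\tfrac{3\alpha\Lambda+2}{\sqrt n}$, so the stated bound holds with room to spare. Substituting this together with the closure property into Theorem~\ref{thm_erm_smooth_gen} yields $\smce(f,\mathcal{D})\le\sqrt{\lambda+\mathrm{err}_n(f)}+6\cdot\tfrac{3\alpha\Lambda+2}{\sqrt n}+\sqrt{\tfrac{\log(1/\delta)}{2n}}$, which is the claim.

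The Rademacher computation is routine---indeed it yields the sharper $(2\alpha\Lambda+1)/\sqrt n$, and the deliberate extra slack in $3\alpha\Lambda+2$ is what makes it harmless to work with a slightly enlarged class if that turns out to be needed. The genuine obstacle is the consistency check in (i): verifying that the norm-bounded class with constants $\alpha$ and $\alpha\Lambda+1$ is truly the class to which Theorem~\ref{thm_erm_smooth_gen} applies, i.e., that it is closed under $f\mapsto f+h\circ f$ with those same bounds rather than merely as a subset of $\calh\oplus\mathbb{R}$---a point that rests entirely on the explicit form of the composition established in the proof of Theorem~\ref{composition_on_one_dim}.
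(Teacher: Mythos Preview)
Your approach is essentially the paper's: invoke Theorem~\ref{composition_on_one_dim} for closure, bound the Rademacher complexity of the norm-restricted class by splitting $f=f'+b$, and substitute into Theorem~\ref{thm_erm_smooth_gen}. The only technical difference is in the bias term: the paper applies Massart's lemma to the two-point set $\{-(\alpha\Lambda+1),\alpha\Lambda+1\}$, obtaining $\sqrt{2\log 2}\,(\alpha\Lambda+1)/\sqrt{n}\le 2(\alpha\Lambda+1)/\sqrt{n}$, whereas your Jensen argument gives the sharper $(\alpha\Lambda+1)/\sqrt{n}$ directly. So the paper's $(3\alpha\Lambda+2)/\sqrt{n}$ is its actual Rademacher estimate rather than deliberate slack as you conjectured; your $(2\alpha\Lambda+1)/\sqrt{n}$ is a genuine improvement in the constant. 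You are also right to flag the closure-with-norm-bounds question (whether $f+h\circ f$ falls back inside the $\alpha$-ball, not merely in $\calh\oplus\mathbb{R}$); the paper's proof does not discuss this and simply cites Theorem~\ref{composition_on_one_dim} for qualitative membership before proceeding to the Rademacher computation.
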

This theorem illustrates how the regularization coefficient, optimization error, and the complexity of $\calf$ affect the smooth CE. Although \citet{blasiok2023when} conjectured that those factors might influence the smooth CE, we formally provide the first rigorous relationships among them. 

Corollary~\ref{thm_kernel_r1_square_gen} provides a uniform bound over the entire class $\mathcal{F}=\{f'+b|f'\in\mathcal{H},b\in\mathbb{R}\}$ and the complexity of this class seems to depend on $\alpha$, not $\lambda$. However, the fact is that $L_2$-regularized ERM constrains the learned function $f_n^{*}$ to a much smaller subset of $\mathcal{F}$ as follows:
\begin{corollary}\label{cor_erm_sq}
    Under the same settings as Corollary~\ref{thm_kernel_r1_square_gen}, the ERM solution of KRR lies in $\Omega:=\{(f',b)\in\calh \oplus \mathbb{R}|\|f'\|_\calh\leq\lambda^{-1/2},|b|\leq \Lambda/\lambda^{1/2}+1\}$. Then, with probability at least $1 - \delta$ over the training dataset $S$, for any $f\in\Omega$, we have:
\begin{align}
\smce(f, \mathcal{D}) \leq \sqrt{\lambda+\mathrm{err}_n(f)} + 6\frac{3\Lambda + 2}{\sqrt{n\lambda}} + \sqrt{\frac{\log\tfrac1\delta}{2n}}.
\end{align}
\end{corollary}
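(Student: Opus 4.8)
The plan is to combine two observations: (i) an \emph{a priori} norm bound showing that the $L_2$ penalty alone confines the empirical minimizer $f^*_n$ to the small ball $\Omega$, so that the complexity of the \emph{effective} hypothesis class is governed by $\lambda$ rather than by an extrinsic radius $\alpha$; and (ii) a direct application of Corollary~\ref{thm_kernel_r1_square_gen} with the choice $\alpha = \lambda^{-1/2}$.

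First I would verify $f^*_n \in \Omega$. Write $f^*_n = f' + b$ with $f' \in \calh$ the RKHS component and $b \in \mathbb{R}$ the bias, as in the representer form. Evaluating the objective of Eq.~\eqref{eq:krr_argmin} at $f \equiv 0 \in \calf = \calh \oplus \mathbb{R}$ gives $L_n(0) = \tfrac{1}{n}\sum_{i=1}^n Y_i^2 \le 1$ because $Y_i \in \{0,1\}$; hence, by optimality of $f^*_n$ and nonnegativity of $\lsq$, $\lambda\|f'\|_{\calh}^2 \le L_n(f^*_n) \le L_n(0) \le 1$, so $\|f'\|_{\calh} \le \lambda^{-1/2}$. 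For the bias, the first-order stationarity condition in $b$ forces $b = \tfrac{1}{n}\sum_{i=1}^n (Y_i - f'(X_i))$ (or this quantity divided by $1+\lambda$ under the convention that also penalizes $b$); combining $\bigl|\tfrac{1}{n}\sum_i Y_i\bigr| \le 1$ with the reproducing-kernel inequality $|f'(X_i)| = |\langle f', \phi(X_i)\rangle_{\calh}| \le \|f'\|_{\calh}\sqrt{k(X_i,X_i)} \le \lambda^{-1/2}\sqrt{\Lambda} \le \lambda^{-1/2}\Lambda$ (using $\Lambda \ge \sup_x k(x,x) \ge 1$ for the Laplace kernel) yields $|b| \le \Lambda\lambda^{-1/2} + 1$. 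These two inequalities are precisely the constraints defining $\Omega$.

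Next I would observe that $\Omega$ is an instance of the hypothesis class assumed in Corollary~\ref{thm_kernel_r1_square_gen} with $\alpha = \lambda^{-1/2}$ — that corollary requires exactly $\sup_{x,x'}k(x,x')\le\Lambda$, $\|f'\|_{\calh}\le\alpha$, and $|b|\le\alpha\Lambda+1$ — and that the composition closure it relies on is supplied by Theorem~\ref{composition_on_one_dim}. Substituting $\alpha = \lambda^{-1/2}$ into its conclusion gives, with probability at least $1-\delta$ and uniformly over $f \in \Omega$,
\begin{align}
\smce(f,\cald) \le \sqrt{\lambda + \mathrm{err}_n(f)} + 6\,\frac{3\Lambda\lambda^{-1/2} + 2}{\sqrt{n}} + \sqrt{\frac{\log\tfrac1\delta}{2n}};
\end{align}
and since the informative regime has $\lambda \le 1$ we may bound $3\Lambda\lambda^{-1/2} + 2 \le (3\Lambda+2)\lambda^{-1/2}$, recovering the stated term $6\frac{3\Lambda+2}{\sqrt{n\lambda}}$. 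I expect the only step requiring genuine care to be the bias bound: unlike $\|f'\|_{\calh}$, the (unpenalized) bias is not directly controlled by the objective value, so one must pass through the stationarity identity for $b$ together with $|f'(x)| \le \|f'\|_{\calh}\sqrt{k(x,x)}$; everything else is the elementary substitution $\alpha = \lambda^{-1/2}$ and the $\lambda \le 1$ simplification of the complexity term.
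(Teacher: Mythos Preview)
Your proposal is correct and follows essentially the same route as the paper: compare $L_n(f^*_n)$ to $L_n(0)\le 1$ to get $\|f'\|_{\calh}\le\lambda^{-1/2}$, bound the bias, and then substitute $\alpha=\lambda^{-1/2}$ into the Rademacher estimate of Corollary~\ref{thm_kernel_r1_square_gen}. Your stationarity argument for $|b|$ is in fact more explicit than the paper's, which simply inspects the squared-loss decomposition and asserts that the optimum must lie in $|b|\le\alpha\Lambda+1$; the paper also leaves the final $\lambda\le 1$ simplification of $3\Lambda\lambda^{-1/2}+2$ implicit, which you spell out.
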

This bound now explicitly captures the bias-variance trade-off: The first term ($\sqrt{\lambda}$), which relates to the training smooth CE (bias), increases with $\lambda$. The second term ($\mathcal{O}(1/\sqrt{\lambda n})$), which is the Rademacher complexity term (variance), decreases as $\lambda$ increases. This clearly demonstrates how a larger $\lambda$ reduces the complexity of the solution space, thereby tightening the generalization bound, at the cost of increasing the training smooth CE. In Section~\ref{sec:experiments}, we evaluate this trade-off numerically.

In fact, when using the Gaussian kernel, $f + h \circ f \notin \calf$ holds, which means that the composition assumption in Theorem~\ref{thm_erm_smooth_gen} does not hold. This is because the RKHS induced by the Gaussian kernel does not include the Lipshitz functions \citep{blasiok2023when}, therefore $h \circ f \notin \calf$.

A practical application where $\mathcal{X} = \mathbb{R}$ naturally arises is \emph{recalibration}~\citep{guo2017calibration, zadrozny2001obtaining, fujisawa2025}, which aims to adjust poorly calibrated model predictions. See Section~\ref{sec_logistic} for the details.

\textbf{When $\mathcal{X} = \mathbb{R}^d$ (Case (ii)):}
The following theorem demonstrates that in higher dimensions, the RKHS of the Laplace kernel is not closed under post-processing. 
As a result, Theorem~\ref{thm_erm_smooth_gen} does not apply, rendering a direct application of Theorem~\ref{thm_erm_smooth_gen} infeasible.
\begin{theorem}\label{thm_composite_higher_dim}
Let $\mathcal{X} = \mathbb{R}^d$ for $d > 1$ and $k$ be the Laplace kernel. Then for any $f \in \mathcal{F} = \calf \oplus \mathbb{R}$ and any $h \in \mathrm{Lip}_1$, it holds that $f + h \circ f \notin \mathcal{F}$.
\end{theorem}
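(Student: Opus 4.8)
The plan is to identify the RKHS $\calh$ of the Laplace kernel on $\real^d$ with a fractional Sobolev space and then to argue that, for $d\ge 2$, this space is \emph{too smooth} to absorb the gradient discontinuity that post-composition with a nondifferentiable $1$-Lipschitz function necessarily creates; this is the exact complement of Theorem~\ref{composition_on_one_dim}. First I would record that the Laplace kernel $k(x,x')=e^{-\|x-x'\|}$ is the Mat\'ern kernel of smoothness $1/2$, whose Fourier transform is $\asymp(1+\|\omega\|^2)^{-(d+1)/2}$, so that $\calh=H^{s}(\real^d)$ with an equivalent norm, $s\coloneqq(d+1)/2$; in particular every $g\in\calh$ is continuous with $g(x)\to 0$ as $\|x\|\to\infty$ (Sobolev embedding, $s>d/2$), lies in $L^2$, and $s\ge 3/2$ precisely when $d\ge 2$. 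Since $\calf=\calh\oplus\real$ is a vector space and $f\in\calf$, $f+h\circ f\in\calf$ would force $h\circ f=(f+h\circ f)-f\in\calf$; writing $f=f'+b$ with $f'\in\calh$ and using $f'(x)\to 0$ at infinity, we get $h(f(x))\to h(b)$, so $h\circ f=g_0+c_0$ with $g_0\in\calh,c_0\in\real$ forces $c_0=h(b)$. Hence $h\circ f\in\calf$ is equivalent to $h\circ f-h(b)\in H^{s}(\real^d)$, and the statement reduces to showing this membership fails. (When $f'\equiv 0$, or $h$ is affine on the range of $f$, the composition is merely an affine image of $f$ and trivially lies in $\calf$; the real content, which we establish, is that for \emph{every} non-constant $f\in\calf$ there is an $h\in\mathrm{Lip}_1$ — in fact $h(u)=|u-v_0|$ for a.e.\ $v_0$ in the range of $f$ — with $f+h\circ f\notin\calf$, which already shows the composition hypothesis of Theorem~\ref{thm_erm_smooth} fails for this $\calf$ when $d\ge 2$.)

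Given a non-constant $f=f'+b$, the next step pins down a level crossed ``transversally on average.'' As $f'\in H^1(\real^d)$ is not a.e.\ constant, $\int_{\{|\nabla f'|>0\}}|\nabla f'|\,dx>0$; writing $A_t\coloneqq\{|\nabla f'|>0\}\cap(f')^{-1}(t)$ and $|\cdot|_{d-1}$ for $(d{-}1)$-dimensional Hausdorff measure, the coarea formula $\int_{\real}|A_t|_{d-1}\,dt=\int_{\{|\nabla f'|>0\}}|\nabla f'|\,dx>0$ yields a positive-measure set of $t$ with $|A_t|_{d-1}>0$. Fix such a $t$, set $v_0\coloneqq t+b$ (automatically strictly inside the range of $f$), and take $h(u)\coloneqq|u-v_0|$, which is $1$-Lipschitz and maps the range of $f$ into $[-1,1]$, so $h\in\mathrm{Lip}_1$; put $\Gamma\coloneqq\{|\nabla f|>0\}\cap f^{-1}(v_0)$, so $|\Gamma|_{d-1}>0$. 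By the chain rule for a Lipschitz map composed with a $W^{1,2}$ function, $\nabla(h\circ f)=\mathrm{sgn}(f-v_0)\,\nabla f$ a.e.; hence $|\nabla(h\circ f)|=|\nabla f|\in L^2$ and $|h\circ f-h(b)|\le|f'|\in L^2$, so $h\circ f-h(b)\in H^1(\real^d)$ — yet across $\Gamma$ the weak gradient flips from $+\nabla f$ to $-\nabla f$ with $\nabla f\ne 0$ there.

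The last step converts this sign flip into a contradiction with $H^{s}$, $s\ge 3/2$. It suffices to show $\nabla(h\circ f)\notin H^{s-1}(\real^d)$, and since $s-1=(d-1)/2\ge 1/2$, it is enough that $\nabla(h\circ f)\notin\dot H^{1/2}(\real^d)$. Using the Gagliardo form $\|v\|_{\dot H^{1/2}}^2\asymp\iint|v(x)-v(y)|^2\,\|x-y\|^{-(d+1)}\,dx\,dy$ with $v=\mathrm{sgn}(f-v_0)\nabla f$, I would estimate the contribution of pairs $(x,y)$ at separation $\|x-y\|\sim r$ lying on opposite sides of a good patch of $\Gamma$: there $v(x)-v(y)\approx 2\nabla f(x_0)\ne 0$, such pairs have measure $\sim|\Gamma|_{d-1}\cdot r^{d+1}$, and the integrand is $\sim r^{-(d+1)}$, so every dyadic scale contributes a fixed positive amount and the sum over $r\to 0$ diverges. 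Hence $h\circ f-h(b)\notin H^{s}=\calh$, so $h\circ f\notin\calf$ and therefore $f+h\circ f\notin\calf$. The dichotomy with $d=1$ is conceptual: there $\calh=H^1$, which lies \emph{below} the ``corner threshold'' $3/2$ (a corner such as $|u|$ is in $H^{t}_{\mathrm{loc}}(\real)$ iff $t<3/2$), so kinks survive, whereas $d\ge 2$ puts $\calh$ at or above that threshold. I expect the main obstacle to be the rigor of this last step: $f$ is only Sobolev, so $\Gamma$ need not be a smooth hypersurface and $\nabla f$ need not be continuous, and one must control the fine (Lebesgue-point / quasi-continuity) behavior of $\nabla f$ along $\Gamma$ — using $\nabla f\in H^{(d-1)/2}$ with $(d{-}1)/2\ge 1/2$ for $d\ge 2$ — to justify ``$v(x)-v(y)\approx 2\nabla f(x_0)$'' on a set of pairs of the asserted size; the coarea bookkeeping and the clipping-free choice $h(u)=|u-v_0|$ are the ingredients that keep this manageable.
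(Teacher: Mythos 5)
Your argument is correct in strategy and takes a genuinely different route from the paper's. The paper identifies $\calh$ with the Sobolev space $H^{(d+1)/2}(\real^d)$ exactly as you do, but then simply invokes a composition-operator theorem of Bourdaud: for $s=(d+1)/2\ge 3/2$, Lipschitz outer functions do not in general act on $H^{s}$ by composition, hence the closure property fails. Your proof replaces that black-box citation with a self-contained mechanism: the coarea formula produces a level $v_0$ crossed on a set of positive $(d{-}1)$-dimensional Hausdorff measure with nonvanishing gradient, the choice $h(u)=|u-v_0|$ flips the sign of $\nabla f$ across that level set, and such a jump of a gradient field across a positive-measure hypersurface is incompatible with $\nabla(h\circ f)\in H^{s-1}\subset \dot H^{1/2}$. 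This buys two things the paper's proof does not give: (i) an explicit witness $h$ for every non-constant $f$, which is the quantitatively correct reading of the theorem (as you note, the literal ``for any $f$ and any $h$'' is false for affine or constant $h$ --- a defect the paper's own two-line proof shares, since composition-operator theorems are statements about the map $r\mapsto r\circ f$ ranging over all $f$, not a pointwise iff); and (ii) a transparent explanation of the $d=1$ versus $d\ge 2$ dichotomy via the corner threshold $3/2$, complementing Theorem~\ref{composition_on_one_dim}.

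The one place where your proposal is not yet a proof is the step you yourself flag: converting ``$v=\mathrm{sgn}(f-v_0)\,\nabla f$ jumps across $\Gamma$'' into divergence of the Gagliardo seminorm requires knowing that $\Gamma$ is countably $(d{-}1)$-rectifiable with positive measure and that $\nabla f$ has nonzero approximate one-sided limits at Hausdorff-almost-every point of it. For a general $f'\in H^{(d+1)/2}$ this needs the coarea formula for Sobolev functions together with the quasicontinuity of $\nabla f\in H^{(d-1)/2}$, whose exceptional sets are null for $(d{-}1)$-dimensional Hausdorff measure when $d\ge 2$; these facts are available, so the gap is one of execution rather than of idea, but as written the dyadic-scale estimate is a heuristic. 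If all that is needed downstream is to refute the closure hypothesis of Theorem~\ref{thm_erm_smooth} (which is how the paper uses this result), a clean shortcut is to run your argument for one explicit $f$, e.g.\ $f'=k(\cdot,0)=e^{-\|x\|}$, whose relevant level sets are spheres with radially constant nonzero gradient, making the trace computation elementary.
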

Similarly to the univariate case, the Gaussian kernel is not closed under post-processing in the multivariate case either, meaning that $f + h \circ f \notin \mathcal{F}$ still holds.

Nevertheless, by leveraging the approximation property of universal kernels in place of the assumption in Theorem~\ref{thm_erm_smooth_gen}, we can still derive a valid (albeit somewhat looser) upper bound on $\smce(f^*_n, \mathcal{D})$.
\begin{theorem}\label{theorem_universal__kernel}
Let $k$ be a universal kernel with associated RKHS $\calh$. Suppose there exist constant $\Lambda$ such that $\sup_{x, x' \in \mathcal{X}} k(x, x') \leq \Lambda$. Then, with probability at least $1 - \delta$ over the draw of the training dataset, the ERM solution of KRR ($f^*_n$) satisfies
\begin{align}
\smce(f^*_n, \mathcal{D}) \leq \sqrt{\lambda + 4\frac{3\Lambda + 2}{\sqrt{\lambda n}} + \sqrt{\frac{2\log\tfrac{2}{\delta}}{n}}}.
\end{align}
\end{theorem}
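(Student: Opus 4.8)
The plan is to bound the \emph{population} post-processing gap and then take a square root via Theorem~\ref{thm:blasiok_pgap}, which gives $\smce(f^*_n,\cald)^2\le\pgap(f^*_n,\cald)$; so it suffices to prove $\pgap(f^*_n,\cald)\le\lambda+4\tfrac{3\Lambda+2}{\sqrt{\lambda n}}+\sqrt{\tfrac{2\log(2/\delta)}{n}}$. Throughout, write $\widehat R(f):=\tfrac1n\sum_{i=1}^n\lsq(f(X_i),Y_i)$ and $R(f):=\Ex[\lsq(f(X),Y)]$, so that $L_n(f)=\widehat R(f)+\lambda\|f\|_\calf^2$, and recall the identity $\pgap(f,\cdot)=\sup_{h\in\mathrm{Lip}_1}\Ex_\cdot[\,2(Y-f(X))h(f(X))-h(f(X))^2\,]$ (and likewise for the empirical version), which exhibits $\pgap$ as a supremum over a \emph{bounded} family of functions of $(X,Y)$ once predictions are clipped to $[0,1]$.

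First I would split $\pgap(f^*_n,\cald)\le\pgap(f^*_n,S)+\bigl(\pgap(f^*_n,\cald)-\pgap(f^*_n,S)\bigr)$ and handle the two pieces separately. For the generalization term, the key observation is that $L_n(f^*_n)\le L_n(0)=\widehat R(0)\le1$ forces $\|f^*_n\|_\calf\le\lambda^{-1/2}$, so $f^*_n$ lies in the ball $\Omega$ of Corollary~\ref{cor_erm_sq}. A uniform bound over $\Omega$ for the quantity defining $\pgap$—its summands are bounded by a constant, and the class $\{x\mapsto h(f(x))\,(Y-f(x)): f\in\Omega,\ h\in\mathrm{Lip}_1\}$ has Rademacher complexity controlled by $\hat{\mathfrak{R}}_S(\Omega)$ together with the metric entropy of $\mathrm{Lip}_1$, via Talagrand contraction and a product-of-bounded-classes argument—gives, after routine bookkeeping of constants, that with probability at least $1-\delta$, $\pgap(f^*_n,\cald)-\pgap(f^*_n,S)\le 4\tfrac{3\Lambda+2}{\sqrt{\lambda n}}+\sqrt{\tfrac{2\log(2/\delta)}{n}}$; the Rademacher constant is bounded exactly as in Corollary~\ref{cor_erm_sq} ($\hat{\mathfrak{R}}_S(\Omega)\lesssim(\sqrt\Lambda+\Lambda)/\sqrt{\lambda n}+1/\sqrt n$, absorbed into $(3\Lambda+2)/\sqrt{\lambda n}$), and the concentration term is McDiarmid applied to $S\mapsto\sup_{f\in\Omega}(\cdot)$.

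Second, for the empirical term I would show $\pgap(f^*_n,S)\le\lambda$ using universality as a surrogate for the closure assumption of Theorem~\ref{thm_erm_smooth}. Fix $h^\star$ attaining (up to $\epsilon$) $\inf_h\widehat R(f^*_n+h\circ f^*_n)$; the post-processed predictor $f^*_n+h^\star\circ f^*_n$ is a bounded continuous function, so by universality of $k$ there is $\tilde g\in\calf$ agreeing with it on $\{X_i\}_{i=1}^n$ up to $\epsilon$, whence $\widehat R(\tilde g)\le\widehat R(f^*_n+h^\star\circ f^*_n)+O(\epsilon)$. Plugging $\tilde g$ into the optimality inequality $\widehat R(f^*_n)+\lambda\|f^*_n\|_\calf^2\le\widehat R(\tilde g)+\lambda\|\tilde g\|_\calf^2$ yields $\pgap(f^*_n,S)\le\lambda\|\tilde g\|_\calf^2-\lambda\|f^*_n\|_\calf^2+O(\epsilon)$, which we bound by $\lambda+O(\epsilon)$ and let $\epsilon\downarrow0$; combining with the generalization term and taking the square root gives the claim.

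The main obstacle is precisely this last step: universality provides density but not a uniform approximation \emph{rate}, so a priori $\|\tilde g\|_\calf$ need not stay within the $\lambda^{-1/2}$-ball, and making $\lambda\|\tilde g\|_\calf^2\le\lambda+O(\epsilon)$ rigorous requires choosing the approximant inside the constraint ball $\Omega$—possible only up to a vanishing but rate-free residual in general, which is exactly why the resulting rate is strictly worse than in the closed case of Corollary~\ref{cor_erm_sq}; a careful treatment must either absorb this residual or impose a mild regularity condition guaranteeing that $f^*_n+h\circ f^*_n$ is approximable within $\Omega$ at the required accuracy. A secondary technical point is controlling the Rademacher complexity of the post-processed class uniformly over $h\in\mathrm{Lip}_1$, where the sup-norm compactness of $\mathrm{Lip}_1$ is the essential ingredient.
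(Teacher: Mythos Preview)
Your decomposition is different from the paper's, and the obstacle you flag in Step~2 is genuine and is not resolved by your argument. The paper avoids it entirely by never passing through the empirical post-processing gap.

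Concretely, the paper works at the \emph{population} level throughout: it bounds $\pgap(f^*_n,\cald)$ directly by inserting the population regularized minimizer $f^* = \argmin_{f\in\calf}\bigl\{R(f)+\lambda\|f\|_\calf^2\bigr\}$ and writing
\[
R(f^*_n)-R(r\circ f^*_n)\;\le\;\bigl[L(f^*_n)-L(f^*)\bigr]+\bigl[L(f^*)-L(r\circ f^*_n)\bigr]+\lambda .
\]
The crucial step is that the second bracket is nonpositive: by Corollary~5.29 of Steinwart--Christmann (universality of $k$ plus $\lsq$ being a continuous Nemitski loss), $\inf_{f\in\calf}L(f)=\inf_f L(f)$ where the right-hand infimum is over \emph{all} measurable functions, hence in particular $L(f^*)\le L(r\circ f^*_n)$. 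This is a statement about equality of infima, not about approximating a fixed target by a norm-controlled RKHS element; no bound on $\|\tilde g\|_\calf$ is ever needed. The remaining term $L(f^*_n)-L(f^*)$ is the excess population (regularized) risk of the ERM solution, handled by the standard bound $\le 2\sup_{f}|L(f)-L_n(f)|$ (Proposition~4.1 in Mohri) and the Rademacher estimate over the ball $\Omega=\{\|f'\|_\calh\le\lambda^{-1/2}\}$, which gives the $(3\Lambda+2)/\sqrt{\lambda n}$ factor.

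Your Step~2, in contrast, attempts to approximate the particular function $f^*_n+h^\star\circ f^*_n$ by some $\tilde g\in\calf$ and then invoke ERM optimality of $f^*_n$, which produces the uncontrolled $\lambda\|\tilde g\|_\calf^2$ term. Universality gives sup-norm density but no norm bound, so this cannot be closed without extra assumptions; your proposed fix (pick $\tilde g\in\Omega$) does not work in general because density in $C(\calx)$ says nothing about density inside a fixed RKHS ball. A secondary point: the slower $\calo(n^{-1/4})$ rate here versus the $\calo(n^{-1/2})$ of Corollary~\ref{cor_erm_sq} is not caused by the norm issue you describe; it arises simply because the generalization term lands \emph{inside} the square root (one controls $\pgap$ and then applies $\smce\le\sqrt{\pgap}$), whereas in the closed case one generalizes $\smce$ directly and the Rademacher term stays outside the root.
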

This theorem holds only for $f^*_n$, whereas Corollary~\ref{thm_kernel_r1_square_gen} applies to any $f \in \calf$. In terms of the upper bound, this theorem yields a rate of $\calo(1/n^{1/4})$, while Corollaries~\ref{thm_kernel_r1_square_gen} and~\ref{cor_erm_sq} achieve $\calo(1/n^{1/2})$. Owing to the relaxed assumptions, however, it applies to a broader class of input spaces $\mathcal{X}$, covering both $\mathcal{X} = \mathbb{R}$ and $\mathcal{X} = \mathbb{R}^d \ (d>1)$. Moreover, this result applies to universal kernels such as the Gaussian kernel. Finally, similar to Corollary~\ref{cor_erm_sq}, there is a bias–variance trade-off concerning $\lambda$.

\subsection{Dual smooth CE and Regularized ERM under Kernel Logistic Regression}\label{sec_logistic}
In this section, following the analytical framework developed in Sections~\ref{subsec:erm_smce_bound} and~\ref{sec_kernel_ridge}, we theoretically analyze the relationship between \emph{dual smooth CE} and regularized ERM with the \emph{cross-entropy loss}, specifically focusing on the setting of \emph{kernel logistic regression} (KLR).

\textbf{Regularized ERM with $\ell^\psi$:}
Let $\mathcal{G}$ denote a class of logit functions $g: \mathcal{X} \to \mathbb{R}$, endowed with a norm $\|\cdot\|_{\mathcal{G}}$.
Given a training dataset $S = {(X_i, Y_i)}_{i=1}^n$, we define the regularized empirical risk as
\begin{align}
L_n(g) \coloneqq \frac{1}{n} \sum_{i=1}^n \ell^\psi(g(X_i), Y_i) + \lambda \|g\|_{\mathcal{G}}^2.
\end{align}
We define $g^{*}_{n} \coloneqq \argmin_{g \in \mathcal{G}} L_{n}(g)$ and $\mathrm{err}_{n}(g) \coloneqq L_n(g) - L_n(g^{*}_{n})$. We now present a generalization bound for the dual smooth CE, analogous to Theorem~\ref{thm_erm_smooth_gen} in the squared loss setting.
\begin{corollary}\label{thm_erm_dual}
Assume that \(\|g\|_\calg\leq G\) for any \(g\). Moreover assume that for any \(g \in \mathcal{G}\) and \(h \in \mathrm{Lip}_{1/4}(\real\times [-1,1])\), it holds that \(g + h \circ g \in \mathcal{G}\), and the Rademacher complexity \(\mathfrak{R}_{\mathcal{D},n}(\mathcal{G})\) is bounded. Then with probability at least \(1 - \delta\) over a draw of \(S\), for any \(g\in\calg\), we have
\begin{align}
  \smce^{(\psi,1/4)}(g, \cald) \leq \sqrt{\lambda G^{2} + \frac{1}{2}\mathrm{err}_n(g)} + 6 \mathfrak{R}_{\mathcal{D},n}(\mathcal{F}) + \sqrt{\frac{\log\tfrac1\delta}{2n}}.
\end{align}
\end{corollary}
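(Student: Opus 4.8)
The plan is to follow the same two-stage route used for Theorems~\ref{thm_erm_smooth} and~\ref{thm_erm_smooth_gen}, replacing every object by its dual counterpart: the loss $\lsq$ by $\lpsi$, the class $\mathrm{Lip}_1$ by $\mathrm{Lip}_{1/4}(\real,[-1,1])$, Theorem~\ref{thm:blasiok_pgap} by Corollary~\ref{cor:blasiok_pgap}, and the norm bound $\|f\|_\calf\le 1$ by $\|g\|_\calg\le G$. Concretely, I would first prove the training-set bound $\smce^{(\psi,1/4)}(g,S)\le\sqrt{\lambda G^{2}+\tfrac12\mathrm{err}_n(g)}$, the dual analogue of Theorem~\ref{thm_erm_smooth}, and then add a Rademacher generalization term to pass from $S$ to $\cald$.

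For the training-set bound, define the empirical dual post-processing gap $\pgap^{(\psi,1/4)}(g,S)$ and the empirical dual smooth CE by replacing $\Ex$ with the average over $S$. Applying Corollary~\ref{cor:blasiok_pgap} with $\cald$ taken to be the empirical distribution on $S$ gives $2\,\smce^{(\psi,1/4)}(g,S)^{2}\le\pgap^{(\psi,1/4)}(g,S)$. I then mimic the proof of Theorem~\ref{thm_erm_smooth}: writing the empirical $\lpsi$-risk of $g$ as $L_n(g)-\lambda\|g\|_\calg^{2}$ and that of $g+h\circ g$ as $L_n(g+h\circ g)-\lambda\|g+h\circ g\|_\calg^{2}$, and using $\|g\|_\calg\le G$ together with $\|g+h\circ g\|_\calg\le G$ (valid since $g+h\circ g\in\calg$ by the composition assumption), one obtains
\[
\pgap^{(\psi,1/4)}(g,S)\ \le\ L_n(g)-\inf_{h}L_n(g+h\circ g)+\lambda G^{2}.
\]
Since $L_n(g)=L_n(g^{*}_{n})+\mathrm{err}_n(g)$ and the composition assumption plus optimality of $g^{*}_{n}$ over $\calg$ give $L_n(g^{*}_{n})\le\inf_{h}L_n(g+h\circ g)$, we get $\pgap^{(\psi,1/4)}(g,S)\le\lambda G^{2}+\mathrm{err}_n(g)$; combining with Corollary~\ref{cor:blasiok_pgap} yields $\smce^{(\psi,1/4)}(g,S)\le\sqrt{\tfrac12(\lambda G^{2}+\mathrm{err}_n(g))}\le\sqrt{\lambda G^{2}+\tfrac12\mathrm{err}_n(g)}$.

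For the generalization step I would bound $\smce^{(\psi,1/4)}(g,\cald)-\smce^{(\psi,1/4)}(g,S)$ uniformly over $g\in\calg$ by controlling the class $\mathcal{C}\coloneqq\{(x,y)\mapsto h(g(x))\,(y-\sigma(g(x)))\,:\,g\in\calg,\ h\in\mathrm{Lip}_{1/4}(\real,[-1,1])\}$. Every element of $\mathcal{C}$ is bounded by $1$ in absolute value, so McDiarmid's inequality produces the $\sqrt{\log(1/\delta)/(2n)}$ term and symmetrization produces a $2\,\hat{\mathfrak{R}}_S(\mathcal{C})$ term. The remaining task — and the main obstacle — is to bound $\hat{\mathfrak{R}}_S(\mathcal{C})$ by a constant multiple of $\hat{\mathfrak{R}}_S(\calg)$: one must peel off the free $1/4$-Lipschitz map $h$ and the residual factor $(Y-\sigma(g(X)))$ by Talagrand's contraction lemma together with a product/composition argument, using that $\sigma$ is $1/4$-Lipschitz and $|Y-\sigma(g(X))|\le 1$, exactly as in the (assumed) proof of Theorem~\ref{thm_erm_smooth_gen}; this gives $\hat{\mathfrak{R}}_S(\mathcal{C})\le 3\,\hat{\mathfrak{R}}_S(\calg)$ and hence the $6\,\mathfrak{R}_{\cald,n}(\calg)$ term after taking expectations. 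Assembling the three pieces gives the stated bound. Relative to Theorem~\ref{thm_erm_smooth_gen} the only genuinely new bookkeeping is tracking the $1/4$ Lipschitz constants and the $G$-dependence of the regularizer, so I expect no conceptual difficulty beyond the contraction argument for $\mathcal{C}$.
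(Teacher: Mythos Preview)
Your proposal is correct and follows the same two-stage structure as the paper: first an empirical bound on the dual smooth CE via the regularized objective, then a uniform deviation bound. The training-set argument is essentially identical to the paper's.

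The one substantive difference is in the generalization step. You propose to bound the Rademacher complexity of the product class $\mathcal{C}=\{(x,y)\mapsto h(g(x))(y-\sigma(g(x)))\}$ directly, peeling off $h$ and the residual factor by a contraction-plus-product argument. The paper avoids the product structure altogether by using the algebraic (polarization) identity
\[
h(g(X))\,(Y-f(X)) \;=\; \tfrac{1}{2}\bigl(Y-(f(X)-h(g(X)))\bigr)^{2}-\tfrac{1}{2}(Y-f(X))^{2}-\tfrac{1}{2}h(g(X))^{2},
\]
with $f=\sigma\circ g$, and then bounds each of the three resulting quadratic terms separately by Talagrand contraction, exactly repeating the proof of Theorem~\ref{thm_erm_smooth_gen}. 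This is what makes the constant $6$ come out cleanly and, importantly, why the final bound is stated in terms of $\mathfrak{R}_{\cald,n}(\calf)$ with $\calf=\{\sigma\circ g:g\in\calg\}$ rather than $\mathfrak{R}_{\cald,n}(\calg)$ as you wrote. Your direct route is plausible but the bilinear structure you flag as ``the main obstacle'' is precisely what the polarization trick sidesteps; once you know it, there is no obstacle left.
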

Similar to Theorem~\ref{thm_erm_smooth_gen}, the dual smooth calibration error is upper bounded by the optimization error, the regularization coefficient, and the complexity of the hypothesis class.
The key difference is that the bound explicitly involves the norm of the logit function $g$, denoted by $G$.
Moreover, by Eq.~\eqref{eq_upperbound_dual_smooth}, the dual smooth CE upper bounds the standard smooth CE; hence, this corollary also provides a theoretical guarantee for the smooth CE.

\textbf{Kernel logistic regression (KLR):}
Here, we analyze how KLR controls the dual smooth CE.
Following the setup in Section~\ref{sec_kernel_ridge}, we adopt the same kernel notation and consider the hypothesis class $\mathcal{G} = \mathcal{H} \oplus \mathbb{R} = \{g' + b \mid g' \in \mathcal{H}, b \in \mathbb{R}\}$.
As with the squared loss setting in Section~\ref{sec_kernel_ridge}, we focus on the Laplace kernel and examine two separate cases: $\mathcal{X} = \mathbb{R}$ and $\mathcal{X} = \mathbb{R}^d$.
We begin with the univariate case, $\mathcal{X} = \mathbb{R}$:
\begin{corollary}\label{thm_kernel_r1_logistic_gen}
Assume $\mathcal{X} = \real$ and $k$ is the Laplace kernel. Suppose there exist constants $\Lambda$ and $G$ such that $\displaystyle \sup_{x,x' \in \mathcal{X}} k(x,x') \leq \Lambda$, $\|g'\|_{\mathcal{\calh}} \leq G$ for all $g'\in\calh$ and and $|b|\leq G\Lambda + 1$. Then, for any $g\in\calg$, we have
\begin{align}
\smce^{(\psi,1/4)}(g, \mathcal{D}) \leq \sqrt{\lambda G + \frac{1}{2}\mathrm{err}_n(g)} + 6\frac{3G\Lambda + 2}{\sqrt{n}} + \sqrt{\frac{\log\tfrac1\delta}{2n}}.
\end{align}
\end{corollary}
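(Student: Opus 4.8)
The plan is to obtain Corollary~\ref{thm_kernel_r1_logistic_gen} as a direct specialization of the general dual-smooth-CE bound of Corollary~\ref{thm_erm_dual} to the hypothesis class $\calg = \calh \oplus \mathbb{R}$ generated by the univariate Laplace kernel. Corollary~\ref{thm_erm_dual} leaves exactly two things to verify in this concrete setting: (i) the composition closure $g + h \circ g \in \calg$ for every $g \in \calg$ and every admissible bounded Lipschitz post-processing $h$; and (ii) an explicit bound on the Rademacher complexity $\mathfrak{R}_{\cald,n}(\calg)$ of the relevant norm-bounded subclass. Everything else---the passage from $\smce^{(\psi,1/4)}$ to the dual post-processing gap via Corollary~\ref{cor:blasiok_pgap}, the reduction of the empirical dual post-processing gap to the regularized objective $L_n$, and the uniform-deviation argument---is already absorbed into Corollary~\ref{thm_erm_dual}, which I would invoke as a black box once (i) and (ii) are in place.

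For (i), this is the logit-space counterpart of Theorem~\ref{composition_on_one_dim}, and I would establish it by the same mechanism. The RKHS of $k(x,x') = \exp(-|x-x'|)$ on $\mathbb{R}$ coincides, up to an equivalent norm, with the Sobolev space $H^1(\mathbb{R}) = \{u : u, u' \in L^2(\mathbb{R})\}$; hence every $g \in \calg$ has the form $g = g_0 + b$ with $g_0 \in H^1(\mathbb{R})$ continuous and $g_0(x) \to 0$ as $|x| \to \infty$, so $g(x) \to b$ at infinity. Given an admissible $h$ (a bounded Lipschitz function on $\mathbb{R}$), the chain rule for a Lipschitz map composed with a Sobolev function gives $(h \circ g)' = h'(g)\, g_0'$ a.e., so $|(h \circ g)'| \le \lip(h)\, |g_0'| \in L^2$, while $|h(g(x)) - h(b)| \le \lip(h)\, |g_0(x)| \in L^2$; therefore $h \circ g - h(b) \in H^1(\mathbb{R}) = \calh$, whence $h \circ g \in \calh \oplus \mathbb{R}$ and $g + h \circ g \in \calg$. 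The one point needing attention is that $g$ now takes values in all of $\mathbb{R}$ rather than $[0,1]$, but since $h$ is globally bounded and Lipschitz this is if anything cleaner than in Theorem~\ref{composition_on_one_dim}.

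For (ii), I would use the textbook RKHS Rademacher bound together with subadditivity over the direct sum. Writing $g = g' + b$ with $\|g'\|_\calh \le G$ and $|b| \le G\Lambda + 1$, the RKHS ball contributes $\hat{\mathfrak{R}}_S(\{g' : \|g'\|_\calh \le G\}) = \tfrac{G}{n}\sqrt{\sum_{i=1}^n k(X_i,X_i)} \le \tfrac{G}{n}\sqrt{n\Lambda} \le G\Lambda/\sqrt{n}$ (using $k(x,x)=1 \le \Lambda$ for the Laplace kernel), and the bounded bias contributes $\hat{\mathfrak{R}}_S(\{b : |b| \le G\Lambda + 1\}) \le (G\Lambda + 1)/\sqrt{n}$, so $\mathfrak{R}_{\cald,n}(\calg) \le (2G\Lambda + 1)/\sqrt{n} \le (3G\Lambda + 2)/\sqrt{n}$; this is exactly the computation used for the squared-loss case in Corollary~\ref{thm_kernel_r1_square_gen} with $\alpha$ replaced by $G$. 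Feeding (i) and (ii) into Corollary~\ref{thm_erm_dual}---with $G$ now playing the role of the RKHS-norm bound on the penalized part $g'$, so that the regularization term reads $\lambda G$ up to the relevant scaling---then yields the claimed bound.

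I expect step (i) to be the only substantive obstacle: it hinges on the Sobolev characterization of the univariate Laplace RKHS and on the chain rule for the composition of a Lipschitz function with an $H^1$ function, together with the $L^2$-decay check for $h\circ g - h(b)$ that returns the composition to $\calh \oplus \mathbb{R}$. For a general universal kernel, or for the Laplace kernel in dimension $d > 1$, this closure fails---cf.\ Theorem~\ref{thm_composite_higher_dim}---which is precisely why the univariate case attains the faster $\calo(1/\sqrt{n})$ rate here, whereas the general case must instead route through the approximation-based argument of Theorem~\ref{theorem_universal__kernel}. Once the closure is granted, the remaining work---bounding $\mathfrak{R}_{\cald,n}(\calg)$ and reading the final inequality off Corollary~\ref{thm_erm_dual}---is routine, with comfortable slack in the $2G\Lambda + 1 \le 3G\Lambda + 2$ step.
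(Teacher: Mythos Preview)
Your proposal is correct and follows essentially the same route as the paper: invoke Corollary~\ref{thm_erm_dual}, verify the composition closure via the Sobolev characterization of the univariate Laplace RKHS (exactly Theorem~\ref{composition_on_one_dim}, which the paper simply cites as ``by the assumptions''), and bound the Rademacher complexity by the same RKHS-ball-plus-bias computation used for Corollary~\ref{thm_kernel_r1_square_gen}. The one step the paper makes explicit that you do not is that the Rademacher term in Corollary~\ref{thm_erm_dual} is $\mathfrak{R}_{\cald,n}(\calf)$ with $\calf=\sigma\circ\calg$, not $\mathfrak{R}_{\cald,n}(\calg)$; the paper inserts the contraction $\hat{\mathfrak{R}}_S(\calf)\le \tfrac14 \hat{\mathfrak{R}}_S(\calg)$ before bounding $\hat{\mathfrak{R}}_S(\calg)$ as you do, but since this only tightens the estimate (and the stated corollary drops the $1/4$ anyway) your argument goes through unchanged.
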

This result parallels Corollary~\ref{thm_kernel_r1_square_gen} for the squared loss, where the optimal solution $f^{*}_{n}$ is available in closed form. In contrast, under the cross-entropy loss, no such closed-form solution exists for $g^{*}_{n}$. Nevertheless, due to the strong convexity of the objective function $L_n(g)$, the solution $g^{*}_{n}$ can still be computed efficiently, and the optimization error $\mathrm{err}_n(g)$ remains controllable. Assuming such an ERM solution is available, we obtain the following result:
\begin{corollary}\label{col_klr}
    Under the same settings as Corollary~\ref{thm_kernel_r1_logistic_gen}, the ERM solution of KLR lies in $\Omega':=\{(g',b)\in\calh \oplus \mathbb{R}|\|g'\|_\calh\leq \lambda^{-1/2},|b|\leq \Lambda /\lambda^{1/2}+1\}$. Then, with probability at least $1 - \delta$ over the training dataset $S$, for any $g\in\Omega'$, we have:
\begin{align}
\smce^{(\psi,1/4)}(g, \cald) \!\leq\!\sqrt{\lambda\!+\!\mathrm{err}_n(g)} + 6\frac{3\Lambda\! + \!2}{\sqrt{n\lambda}}\! + \!\sqrt{\frac{\log\tfrac1\delta}{2n}}.
\end{align}
\end{corollary}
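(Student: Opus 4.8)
The plan is to follow the proof of Corollary~\ref{cor_erm_sq} essentially verbatim, with the squared loss $\lsq$ replaced by the logistic loss $\lpsi$, the primal post-processing gap replaced by its dual version $\pgap^{(\psi,1/4)}$, Theorem~\ref{thm:blasiok_pgap} replaced by Corollary~\ref{cor:blasiok_pgap}, and Theorem~\ref{thm_erm_smooth_gen} replaced by Corollary~\ref{thm_erm_dual}. Only two ingredients genuinely need checking: (i) that the KLR minimizer $g^*_n$ lands in the localized set $\Omega'$, and (ii) that the dual smooth-CE machinery can be instantiated on $\Omega'$, whose Rademacher complexity is of order $1/\sqrt{n\lambda}$ rather than $O(1)$.

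For (i), write $g^*_n = g'^*_n + b^*_n$ with $g'^*_n\in\calh$. Since $\lpsi\ge 0$ and $\lpsi(0,y)=\psi(0)=\log 2$ for both $y\in\{0,1\}$, comparing the regularized objective at $g^*_n$ with its value at the all-zero logit gives $\lambda\|g'^*_n\|_\calh^2 \le L_n(g^*_n) \le L_n(0) = \log 2 \le 1$, hence $\|g'^*_n\|_\calh\le\lambda^{-1/2}$ --- the first defining inequality of $\Omega'$, and the exact analogue of the bound $L_n(f^*_n)\le L_n(0)\le 1$ used for KRR. For the bias, differentiate $L_n$ in $b$ at the optimum to obtain the stationarity relation $\tfrac1n\sum_i\sigma(g'^*_n(X_i)+b^*_n)=\tfrac1n\sum_i Y_i$, and combine it with the reproducing-kernel bound $\sup_x|g'^*_n(x)|\le\|g'^*_n\|_\calh\sqrt{\sup_x k(x,x)}\le\Lambda\lambda^{-1/2}$ to confine $b^*_n$ to $|b^*_n|\le\Lambda\lambda^{-1/2}+1$, the second defining inequality of $\Omega'$; this matches the constraint already imposed on $\calg$ in Corollary~\ref{thm_kernel_r1_logistic_gen}.

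For (ii), note first that the composition-closure property carries over: the proof of Theorem~\ref{composition_on_one_dim} uses only that the outer map is Lipschitz and that $\mathcal{X}=\mathbb{R}$ with the Laplace kernel, so it applies equally to the $1$-Lipschitz maps $\mathbb{R}\to[-4,4]$ appearing in $\pgap^{(\psi,1/4)}$ and to the $1/4$-Lipschitz maps $\mathbb{R}\to[-1,1]$ appearing in $\smce^{(\psi,1/4)}$, keeping $g+h\circ g$ inside $\calg=\calh\oplus\mathbb{R}$. The training-level estimate then proceeds as in the proof sketch of Theorem~\ref{thm_erm_smooth}: adding and subtracting the regularizer and using optimality of $g^*_n$ over $\calg$ together with $g+h\circ g\in\calg$ bounds $\pgap^{(\psi,1/4)}(g,S)$ by $\mathrm{err}_n(g)$ plus a regularizer term, and Corollary~\ref{cor:blasiok_pgap} (supplying $2\,\smce^{(\psi,1/4)}(g,S)^2\le\pgap^{(\psi,1/4)}(g,S)$) converts this into $\smce^{(\psi,1/4)}(g,S)\le\sqrt{\lambda+\mathrm{err}_n(g)}$, exactly as in Corollary~\ref{cor_erm_sq}. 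To reach the population quantity, bound $\mathfrak{R}_{\cald,n}(\Omega')$ by splitting $\Omega'$ into its RKHS-ball part of radius $\lambda^{-1/2}$ (contributing at most $\lambda^{-1/2}\sqrt{\Lambda/n}$ via $\widehat{\mathfrak{R}}_S(\{\|g'\|_\calh\le r\})\le\tfrac{r}{n}\sqrt{\sum_i k(X_i,X_i)}$) and its bias interval of radius $\Lambda\lambda^{-1/2}+1$ (contributing $(\Lambda\lambda^{-1/2}+1)/\sqrt n$), so that after $\sqrt\Lambda\le\Lambda$ and $\lambda\le 1$ the total is at most $(3\Lambda+2)/\sqrt{n\lambda}$; a symmetrization step, a contraction step peeling off the bounded, uniformly $O(1)$-Lipschitz map $u\mapsto h(u)(y-\sigma(u))$, and a McDiarmid inequality then produce the factor $6$ on the Rademacher term and the residual $\sqrt{\log(1/\delta)/2n}$. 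If a bound on the ordinary smooth CE is also wanted, apply Eq.~\eqref{eq_upperbound_dual_smooth}.

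The step I expect to be the main obstacle is the bias bound in (i). For KRR the optimal bias has the closed form $b^*_n=\bar Y-\overline{g'^*_n(X)}$ and is therefore trivially inside the required interval, whereas for KLR it is only implicitly characterized through $\tfrac1n\sum_i\sigma(g'^*_n(X_i)+b^*_n)=\bar Y$, and confining it to $[-\Lambda\lambda^{-1/2}-1,\ \Lambda\lambda^{-1/2}+1]$ either requires $\calg$ to impose this range a priori (as Corollary~\ref{thm_kernel_r1_logistic_gen} does) or a non-degeneracy assumption preventing $\bar Y$ from hitting $0$ or $1$, since otherwise the minimizing bias escapes to $\pm\infty$. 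A secondary point to watch is that the $1/4$-smoothness of $\psi$ and the enlarged ranges $[-4,4]$ and $[-1,1]$ do not inflate the constants beyond those stated.
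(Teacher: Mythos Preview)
Your proposal is correct and mirrors the paper's own proof: establish $\|g'^*_n\|_\calh\le\lambda^{-1/2}$ from $L_n(0)=\log 2<1$, argue that the optimal bias stays in $[-\Lambda\lambda^{-1/2}-1,\Lambda\lambda^{-1/2}+1]$, and then substitute $\alpha=\lambda^{-1/2}$ into the Rademacher estimate already computed for Corollary~\ref{thm_kernel_r1_square_gen} (the paper also records the $1/4$-Lipschitz contraction $\hat{\mathfrak{R}}_S(\calf)\le\tfrac14\hat{\mathfrak{R}}_S(\calg)$, since Corollary~\ref{thm_erm_dual} is stated with $\mathfrak{R}_{\cald,n}(\calf)$). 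Your stationarity argument for the bias is in fact more explicit than the paper's, which merely writes out $L_n(g)$ in terms of $b$ and asserts that $|b|\le\alpha\Lambda+1$ is ``the region where the optimal solution exists''; the degeneracy you flag when $\bar Y\in\{0,1\}$ is real and the paper does not address it.
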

Similar to Corollary~\ref{cor_erm_sq}, this bound now explicitly captures the bias-variance trade-off concerning $\lambda$.

When $\mathcal{X} = \mathbb{R}^d$, an analogous result to Theorem~\ref{theorem_universal__kernel} holds for any universal kernel.
This means that even without the closure property, the dual smooth CE is controllable, albeit with the slower $\calo(n^{-1/4})$ convergence rate.
The complete statement and proof are presented in Appendix~\ref{app_proos}.

\textbf{Application to recalibration:} 
As discussed in Section~\ref{sec_kernel_ridge}, the case of $\mathcal{X} = \mathbb{R}$ is particularly relevant for recalibration.
In binary classification, recalibration typically involves training a parametric corrector that takes the model's predicted probability for the top label, represented as a scalar, as input and outputs a calibrated probability estimate.
A classical example of this approach is Platt scaling, which fits a logistic regression model to the output scores of a base classifier~\citep{platt1999probabilistic}.

We now outline the recalibration procedure. Let $g$ be a pre-trained logit function that need not belong to an RKHS. If the predictor $f = \sigma(g)$ is miscalibrated, we apply a post-hoc correction using a function $\eta: \real \to [0,1]$, trained on a separate recalibration dataset. Given $g$ and $\mathcal{S}_{\mathrm{re}} = \{(\tilde{X}_i, \tilde{Y}_i)\}_{i=1}^{n_{\mathrm{re}}}$, we form $\mathcal{S}'_{\mathrm{re}} = \{(g(\tilde{X}_i), \tilde{Y}_i)\}_{i=1}^{n_{\mathrm{re}}}$ and train $\eta$ on $\mathcal{S}'_{\mathrm{re}}$. The recalibrated predictor is $\eta \circ g$, with predicted probability $\sigma(\eta(g(x)))$. Since $\{g(\tilde{X}_i)\}$ is one-dimensional and independent, assuming $\eta$ is KLR, Corollary~\ref{thm_kernel_r1_logistic_gen} applies directly. To our knowledge, this gives the first theoretical guarantee for the recalibration performance of KLR. We empirically evaluate kernel-based recalibration in Section~\ref{subsec:exp_recalibration}.

\begin{figure*}[t]
    \centering
    \includegraphics[width=\textwidth]{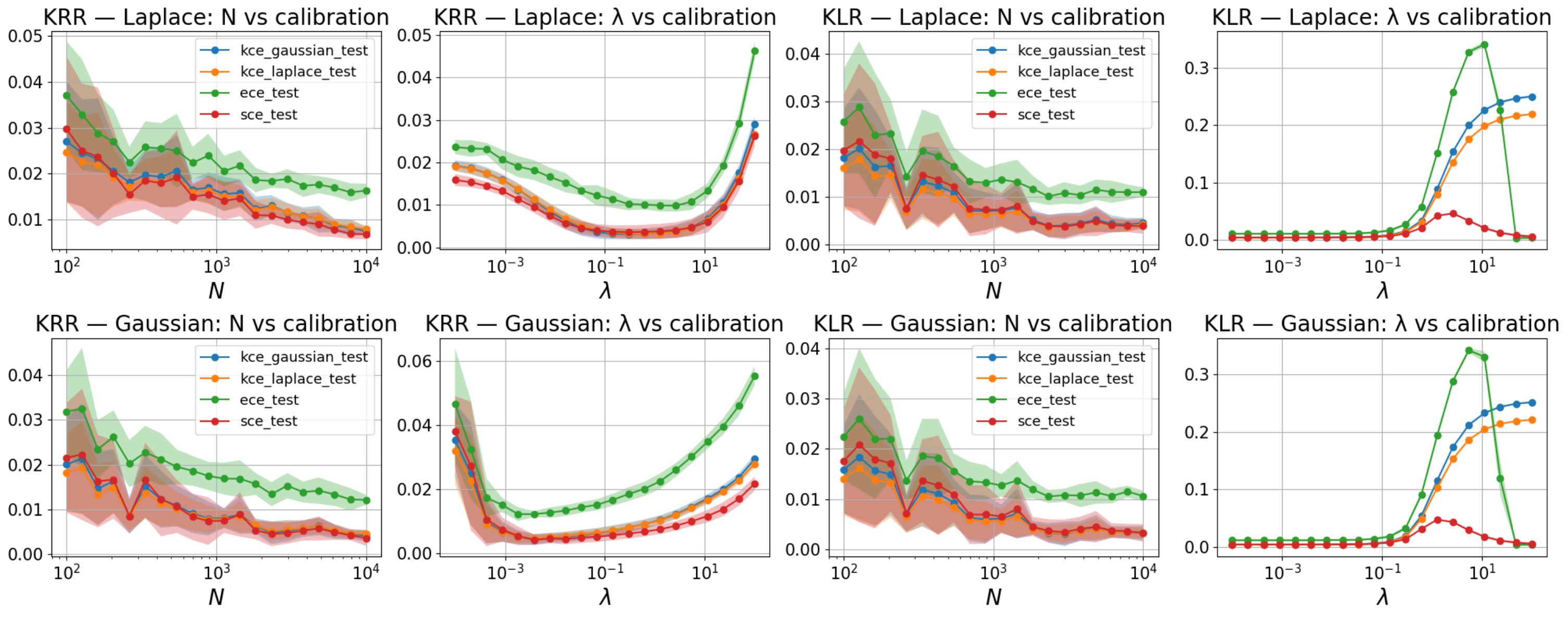}
    \caption{Experimental Results: Effect of Sample Size and Regularization on Calibration Metrics on the toy dataset.}
    \label{fig:res_synthesize}
\end{figure*}

\section{Related Work}\label{sec_discuss}

\textbf{Kernel-based calibration metrics:}
Our work connects closely with kernel-based metrics, particularly the Maximum Mean Calibration Error (MMCE)~\citep{pmlr-v80-kumar18a}. MMCE is defined as $\kce(f, \mathcal{D}, \mathcal{H}_1) \coloneqq \sup_{\phi \in \mathcal{H}_1} \mathbb{E}\left[\phi(f(X)) \cdot (Y - f(X))\right]$, where the supremum is taken over functions in an RKHS, instead of the Lipschitz functions used in smooth CE. The choice of kernel is critical. For the Laplace kernel, the RKHS contains Lipschitz functions, establishing a direct inequality: $\frac{1}{3} \smce(f, \mathcal{D}) \leq \kce(f, \mathcal{D}, k)$. This connection reinforces our theoretical finding in Theorem~\ref{composition_on_one_dim} that the Laplace kernel's RKHS is well-suited for calibration analysis. While \citet{pmlr-v80-kumar18a} proposed algorithms that explicitly regularize with an MMCE term, our work is fundamentally different: we show that standard $L_2$-regularization---a much simpler and more common technique--is sufficient to control smCE.

\textbf{Theoretical guarantees for calibration:}
Our primary contribution is providing theoretical guarantees for calibration under a standard training procedure. The most related work is that of \citet{blasiok2023when}, whose population-level analysis served as the starting point for our work, as detailed in Section~\ref{sec_prelimi_cal}. While they analyze the finite-sample estimation of smCE, we provide the first explicit algorithmic conditions under which the training smCE itself is guaranteed to be small (Theorem~\ref{thm_erm_smooth}). Other works have focused on the generalization properties of the binning-based ECE~\citep{futami2024information}, but did not identify how to modify the training algorithm to ensure a small training calibration error. Our work fills this gap by proving that canonical $L_2$-regularized ERM provides precisely such a guarantee.

\textbf{Recalibration methods:}
Finally, our results provide new theoretical insights into recalibration. This task naturally reduces the input space to a single dimension, making our one-dimensional analysis (e.g., Corollary~\ref{thm_kernel_r1_square_gen}) directly applicable. 
Previous theoretical guarantees for recalibration have largely focused on non-parametric methods~\citep{gupta2020,gupta21b}. 
In contrast, our work provides the first theoretical justification for using parametric models like kernel logistic regression for recalibration, explaining the empirical success of such methods in real-world settings.

\begin{figure*}[t]
    \centering
    \includegraphics[width=\textwidth]{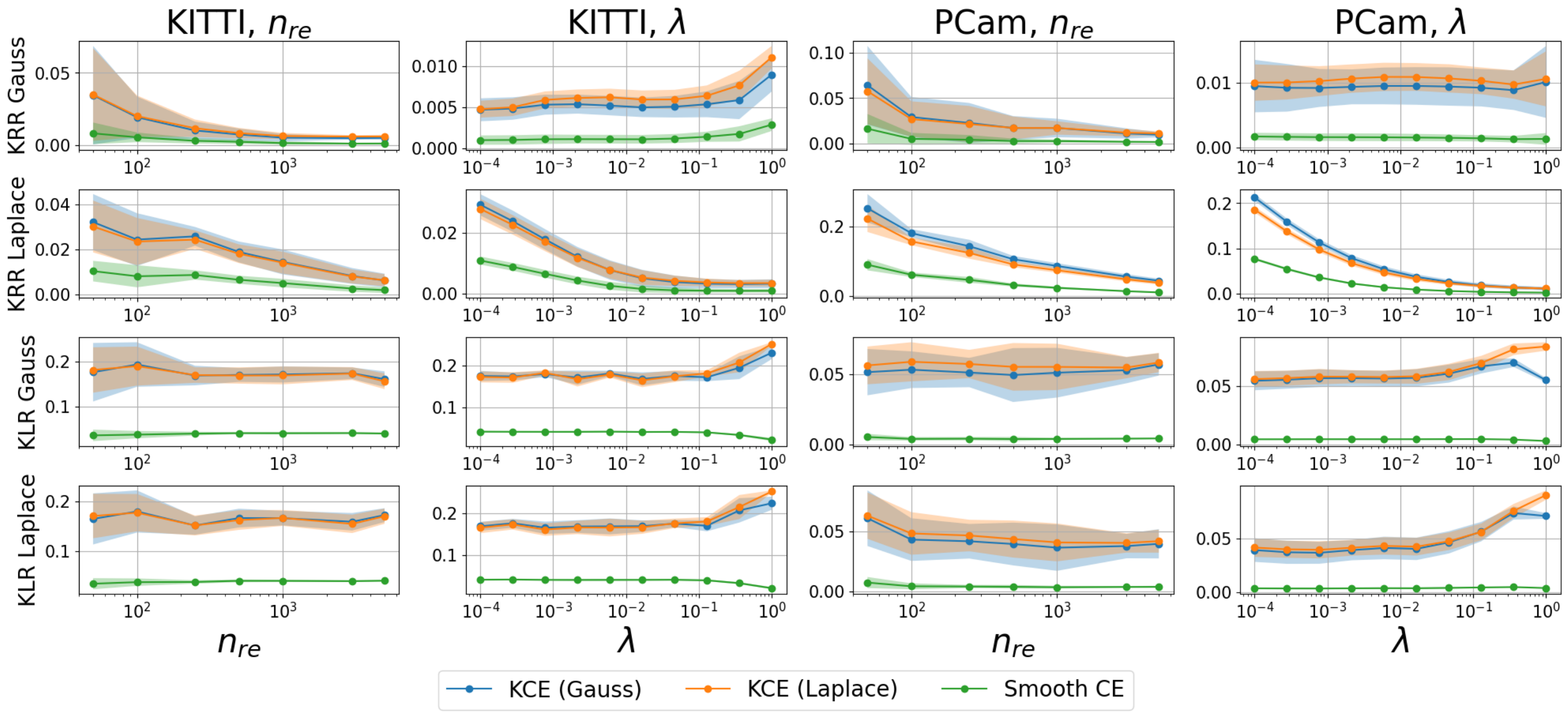}
    \caption{Effect of sample size and regularization on calibration metrics on the KITTI and PCam datasets ($1$-layer neural networks).}
    \label{fig:res_ce_all_nn}
\end{figure*}

\section{Experiment}\label{sec:experiments}
In this section, we conduct experiments to empirically validate our theoretical findings. 
We focus on two key settings that directly correspond to our main theoretical results: (i) a toy dataset where the input space is univariate ($\mathcal{X} = \mathbb{R}$), designed to test our tightest bounds (e.g., Corollaries~\ref{cor_erm_sq} and~\ref{col_klr}), and (ii) a practical recalibration task on real-world data, which also reduces to a one-dimensional problem.
Experiments on higher-dimensional classification tasks ($\mathcal{X} = \mathbb{R}^d$) are presented in Appendix~\ref{app:additional_res}, with a comprehensive summary of all settings and implementation details in Appendix~\ref{app:detail_exp}.

\subsection{Toy Data Experiments (\texorpdfstring{$\calx = \mathbb{R}$}{Lg})}
To numerically validate our theoretical findings, we conduct experiments on a one-dimensional synthetic dataset where the ground-truth conditional probability is analytically tractable. Our theory makes two key predictions that we test here: (i) the existence of a bias-variance trade-off governed by the regularization parameter $\lambda$ (Corollaries~\ref{cor_erm_sq} and~\ref{col_klr}), and (ii) the convergence of smCE as the sample size $n$ increases (Theorem~\ref{theorem_universal__kernel}). 
We designed two experiments to investigate these phenomena for both KRR and KLR. For comparison, we also plot the binning-based ECE and MMCE alongside smCE in Figure~\ref{fig:res_synthesize}.

\textbf{Effect of $\lambda$:}
First, we fix the sample size n and vary $\lambda$. 
For KRR, the results (Figure~\ref{fig:res_synthesize}, second panel) show a distinct U-shaped curve. This is an empirical manifestation of the bias-variance trade-off described in Corollary~\ref{cor_erm_sq}: for small $\lambda$, the model overfits (high variance), while for large $\lambda$, it underfits (high bias). An optimal $\lambda$ balances these two sources of error to achieve the lowest smCE.

For KLR (Figure~\ref{fig:res_synthesize}, rightmost panel), we observe an analogous U-shaped curve. Interestingly, as $\lambda$ becomes extremely large, the predictor collapses to the marginal probability $P(Y)$. While this model is uninformative, it is perfectly calibrated, causing both smCE and ECE to approach zero.

\textbf{Effect of $n$:}
Next, we investigate the effect of the sample size $n$. For KRR, we set $\lambda$ according to the rates that minimize our theoretical bounds: $\lambda = O(n^{1/2})$ for the Laplace kernel and $\lambda = O(n^{1/3})$ for the Gaussian kernel.
As predicted by our theory, Figure~\ref{fig:res_synthesize} (left panel) shows that smCE consistently decreases as $n$ grows. 
For KLR, obtaining a closed-form solution for the optimal $\lambda$ is intractable, so we fix it at a constant value ($\lambda=0.01$). The results (third panel) show that smCE decreases as $n$ increases, which is consistent with the reduction of the generalization error term in our bounds.

\subsection{Recalibration on Real-World Datasets  (\texorpdfstring{$\calx = \mathbb{R}$}{Lg})}\label{subsec:exp_recalibration}
We now validate our one-dimensional theoretical analysis in a practical setting: post-hoc recalibration as discussed in Section~\ref{sec:erm_analysis_ce}.
We trained a one-layer neural network on two benchmark datasets, KITTI~\citep{Geiger12} and PCam~\citep{Veeling18}. We then used the outputs of this base model to form a one-dimensional recalibration dataset, $\{(f(X_i), Y_i)\}_{i=1}^{n_{\mathrm{re}}}$, on which we applied KRR and KLR as recalibrators. We evaluated the effect of both the recalibration sample size $n_{\mathrm{re}}$ and $\lambda$.

The empirical results, shown in Figure~\ref{fig:res_ce_all_nn}, support our theoretical guarantees by confirming two key predictions. 
We observe that as the number of recalibration samples $n_\mathrm{re}$ increases, the smooth CE consistently decreases across all settings. This validates the convergence behavior predicted by our bounds, where the generalization error term diminishes as the sample size grows. Furthermore, the results illustrate the predicted trade-off with respect to the regularization parameter $\lambda$. 
We see that increasing $\lambda$ beyond an optimal point generally leads to a higher smooth CE, which is consistent with our theoretical bias-variance trade-off where excessive regularization increases the bias term. These findings underscore the practical relevance of our theory, demonstrating that our analysis accurately describes the behavior of kernel-based recalibration in real-world applications.

\section{Conclusion and Limitation}\label{sec_conclusion}
This work establishes a foundational link between a standard machine learning principle---$L_2$-regularized ERM---and model calibration. 
We demonstrated that this canonical training procedure directly controls the smCE for binary classification. Our finite-sample bounds, instantiated for kernel methods, formally characterize the interplay between regularization, optimization, and model complexity.
Our analysis, as a first step, highlights important directions for future research. A key challenge is to improve the convergence rates of our bounds for the more general function classes that do not satisfy the strict closure assumption (as in Theorem~\ref{theorem_universal__kernel}). 
Furthermore, extending our framework from the binary to the multiclass setting---leveraging recent developments in multiclass calibration metrics~\citep{gopalan2024computationally}---is a critical next step.

\clearpage
\section*{Acknowledgments}
MF was supported by KAKENHI Grant Number 25K21286, Japan.
FF was supported by JSPS KAKENHI Grant Number JP23K16948, Japan.
FF was supported by JST, PRESTO Grant Number JPMJPR22C8, Japan. 

\appendix
\bibliography{main}
\bibliographystyle{plainnat}

\section{Proofs}\label{app_proos}
\subsection{Proof of Theorem~\ref{thm_erm_smooth}}
First, we define the empirical post-processing gap 
\begin{align}
    \pgap(f, S)&\coloneqq  \frac{1}{n}\sum_{i=1}^n \ell_{\mathrm{sq}}(f(X_i), Y_i) - \inf_{h \in \mathrm{Lip}_{L=1}}  \frac{1}{n}\sum_{i=1}^n \ell_{\mathrm{sq}}(f(X_i)+h(f(X_i)), Y_i)].
\end{align}
We can prove that
\begin{align}
    \smce(f, S)^2 \leq \pgap(f, S) \leq 2 \smce(f, S).
\end{align}
The proof follows directly from Lemma 4.7 in \citet{blasiok2023when}, with the only modification of replacing the population expectation under $\cald$ with the empirical expectation in the last step of their proof.

Accordingly, we upper bound the empirical post-processing gap using the $L_2$-regularized ERM objective evaluated at $f$ and $f + h \circ f$. For notational simplicity, we denote $\ell_{\mathrm{sq}}$ by $\ell$ and write $r \circ f \coloneqq f + h \circ f$.

Then we have
\begin{align}
&\pgap(f, S)\\
&=\frac{1}{n}\sum_{i=1}^n\ell(f(X_i),Y_i)-\inf_{h \in \mathrm{Lip}_{L=1}}\frac{1}{n}\sum_{i=1}^n\ell(r\circ f(X_i),Y_i)\\
&=\frac{1}{n}\sum_{i=1}^n\ell(f(X_i),Y_i)+\lambda\|f\|_\calf^2-\lambda\|f\|_\calf^2-\inf_{h \in \mathrm{Lip}_{L=1}}\frac{1}{n}\sum_{i=1}^n\ell(r\circ f(X_i),Y_i)\\
&=\frac{1}{n}\sum_{i=1}^n\ell(f(X_i),Y_i)+\lambda\|f\|_\calf^2-\lambda\|f\|_\calf^2-\inf_{h \in \mathrm{Lip}_{L=1}}\left(\frac{1}{n}\sum_{i=1}^n\ell(r\circ f(X_i),Y_i)+\lambda\|r\circ f\|_\calf^2-\lambda\|r\circ f\|_\calf^2\right)\\
&=L_n(f)-\inf_{h \in \mathrm{Lip}_{L=1}}\left(\frac{1}{n}\sum_{i=1}^n\ell(r\circ f(X_i),Y_i)+\lambda\|r\circ f\|_\calf^2+\lambda\|f\|_\calf^2-\lambda\|r\circ f\|_\calf^2\right)\\
&\leq L_n(f)-\inf_{h \in \mathrm{Lip}_{L=1}}\left(\frac{1}{n}\sum_{i=1}^n\ell(r\circ f(X_i),Y_i)+\lambda\|r\circ f\|_\calf^2\right)-\inf_{h \in \mathrm{Lip}_{L=1}}\left(\lambda\|f\|_\calf^2-\lambda\|r\circ f\|_\calf^2\right)\\
&\leq L_n(f^*_n)+\mathrm{err}_n(f)-\inf_{h \in \mathrm{Lip}_{L=1}}\left(\frac{1}{n}\sum_{i=1}^n\ell(r\circ f(X_i),Y_i)+\lambda\|r\circ f\|_\calf^2\right)+\lambda\\
&=L_n(f^*_n)+\mathrm{err}_n(f)-\inf_{h \in \mathrm{Lip}_{L=1}}L_n(r\circ f)+\lambda\\
&\leq \mathrm{err}_n(f)+\lambda,
\end{align}
where we used that $L_n(f)=L_n(f^*_n)+\mathrm{err}_n(f)$ and
    $\|r\circ f\|_\calf^2\leq 1$, which is a direct consequence of the assumption on the norm of $\calf$ and $\|f\|_{\calf}\leq 1$. In the final line, we also used the following relation:
\begin{align}
 L_n(f^*_n)\leq \inf_{h \in \mathrm{Lip}_{L=1}}L_n(r\circ f)
\end{align}
since $f^*_n$ minimizes $L_n(f)$ over $\mathcal{F}$ and we assume $f + h \circ f \in \mathcal{F}$.

\subsection{Proof of Theorem~\ref{thm_erm_smooth_gen}}\label{sec_proof_rademacher}
\begin{proof}
We follow the proof technique of Theorem 9.5 in \citet{blasiok2023unify};
\begin{align}
    &\smce(f, \mathcal{D})-\smce(f, S)\\
    &=\sup_\eta \Ex\eta(f(X))\cdot (Y-f(X))-\sup_\eta'\frac{1}{n}\sum_i \eta'(f(X_i))\cdot (Y_i-f(X_i))\\
    &=\sup_\eta\Ex\left[\frac{1}{2}(Y-(f-\eta(f)))^2-\frac{1}{2}(Y-f)^2-\frac{1}{2}\eta(f)^2\right]\\
    &-\sup_{\eta'}\frac{1}{n}\sum_i\left[\frac{1}{2}(Y_i-(f_i-\eta'(f_i)))^2-\frac{1}{2}(Y-f_i)^2-\frac{1}{2}\eta'(f_i)^2\right]\\
    &\leq \sup_\eta\Big( \Ex\left[\frac{1}{2}(Y-(f-\eta(f)))^2-\frac{1}{2}(Y-f)^2-\frac{1}{2}\eta(f)^2\right]\\
    &-\frac{1}{n}\sum_i\left[\frac{1}{2}(Y_i-(f_i-\eta(f_i)))^2-\frac{1}{2}(Y-f_i)^2-\frac{1}{2}\eta(f_i)^2\right]\Big)
\end{align}
and then we take the supremum for $f$,
\begin{align}
    &\sup_{f\in\calf}\smce(f, \mathcal{D})-\smce(f, S)\\
    &\leq \sup_{f\in\calf}\sup_\eta\Big( \Ex\left[\frac{1}{2}(Y-(f-\eta(f)))^2-\frac{1}{2}(Y-f)^2-\frac{1}{2}\eta(f)^2\right]\\
    &-\frac{1}{n}\sum_i\left[\frac{1}{2}(Y_i-(f_i-\eta(f_i)))^2-\frac{1}{2}(Y-f_i)^2-\frac{1}{2}\eta(f_i)^2\right]\Big).
\end{align}
By setting 
\begin{align}
    \Phi(S)=\sup_{f\in\calf}\sup_\eta\Ex&\left[\frac{1}{2}(Y-(f-\eta(f)))^2-\frac{1}{2}(Y-f)^2-\frac{1}{2}\eta(f)^2\right]\\
    &-\frac{1}{n}\sum_i\left[\frac{1}{2}(Y_i-(f_i-\eta(f_i)))^2-\frac{1}{2}(Y-f_i)^2-\frac{1}{2}\eta(f_i)^2\right],
\end{align}
From the proof of Theorem 3.3 in \citet{Mohri}, we have, with probability at least $1 - \delta$,
\begin{align}
\Phi(S) \leq \Ex_S[\phi(S)] + \sqrt{\frac{\log \frac{1}{\delta}}{2n}}.
\end{align}
This can be shown using McDiarmid's inequality, following the same argument as in the proof of Theorem 3.3 in \citet{Mohri} since 
$$
\eta(f(X))\cdot (Y-f(X))=\frac{1}{2}(Y_i-(f_i-\eta(f_i)))^2-\frac{1}{2}(Y-f_i)^2-\frac{1}{2}\eta(f_i)^2
$$
and the constants for the bounded differences in McDiarmid's inequality are equal to $1$, and thus the result is identical to that appearing in Theorem 3.3 in \citet{Mohri}.

We then set $\omega(f,\eta,Y,X)=\frac{1}{2}(Y-(f-\eta(f)))^2-\frac{1}{2}(Y-f)^2-\frac{1}{2}\eta(f)^2$, and by the standard symmetrization argument, we have
\begin{align}
    \Ex_S[\phi(S)]\leq 2\Ex_{\sigma,S}\left[\sup_{f}\sup_\eta \frac{1}{n}\sum_{i=1}^n\sigma_i\omega(f,\eta,Y_i,X_i) \right].
\end{align}
Then by the property of the supremum, 
\begin{align}&\Ex_{\sigma,S}\left[\sup_{f}\sup_\eta \frac{1}{n}\sum_{i=1}^n\sigma_i\omega(f,\eta,Y_i,X_i) \right]\\
    &\leq \frac{1}{2}\Ex_{\sigma,S}\left[\sup_{f}\sup_\eta \frac{1}{n}\sum_{i=1}^n\sigma_i(Y_i-(f_i-\eta(f_i)))^2 \right]+\frac{1}{2}\Ex_{\sigma,S}\left[\sup_{f}\sup_\eta \frac{1}{n}\sum_{i=1}^n\sigma_i(Y_i-f_i)^2 \right]\\
    &+\frac{1}{2}\Ex_{\sigma,S}\left[\sup_{f}\sup_\eta \frac{1}{n}\sum_{i=1}^n\sigma_i\eta(f_i)^2 \right].
\end{align}
Then, from Propositions 11.2 and 11.2 in \citet{Mohri},
\begin{align}
    \frac{1}{2}\Ex_{\sigma,S}\left[\sup_{f}\sup_\eta \frac{1}{n}\sum_{i=1}^n\sigma_i(Y_i-(f_i-\eta(f_i)))^2 \right]\leq \Ex_{\sigma,S}\left[\sup_{f}\sup_\eta \frac{1}{n}\sum_{i=1}^n\sigma_i(f_i-\eta(f_i)) \right]\leq \mathfrak{R}_{\cald,n}(\calf)
\end{align}
and
\begin{align}
    \frac{1}{2}\Ex_{\sigma,S}\left[\sup_{f}\sup_\eta \frac{1}{n}\sum_{i=1}^n\sigma_i(Y_i-f_i)^2 \right]\leq \Ex_{\sigma,S}\left[\sup_{f}\sup_\eta \frac{1}{n}\sum_{i=1}^n\sigma_if_i\right]\leq \mathfrak{R}_{\cald,n}(\calf)
\end{align}
and
\begin{align}
    \frac{1}{2}\Ex_{\sigma,S}\left[\sup_{f}\sup_\eta \frac{1}{n}\sum_{i=1}^n\sigma_i\eta(f_i)^2 \right]\leq \Ex_{\sigma,S}\left[\sup_{f}\sup_\eta \frac{1}{n}\sum_{i=1}^n\sigma_i\eta(f_i)\right]\leq \mathfrak{R}_{\cald,n}(\calf)
\end{align}
holds. In conclusion, we have
\begin{align}
    \Ex_S[\phi(S)]\leq 6\mathfrak{R}_{\cald,n}(\calf)
\end{align}
where we used the composition assumption that $f+\eta\circ f\in\calf$ in the last inequality. This concludes the proof.
\end{proof}

\subsection{Proof of Theorem~\ref{composition_on_one_dim}}

\begin{proof}
From Proposition 7 in \citet{rakhlin19a}, the RKHS associated with the Laplace kernel on $\real$ corresponds to the Sobolev space $H^{1} = W^{2,1}$ (see also \citet{buchholz2022kernel}). Then, by Theorem 6 in \citet{bourdaud2022introduction}, the composition of a Lipschitz function with a function in $H^{1}$ remains in $H^{1}$. To apply Theorem 6, the Lipschitz function $r$ must satisfy $r(0) = 0$. If $r(0) \neq 0$, define $\tilde{r}(x) = r(x) - r(0)$, so that $\tilde{r} \circ f \in H^{1}$. Since constant functions are included in the Sobolev space, we have $r \circ f(x) = \tilde{r} \circ f(x) + r(0) \in H^{1}$.

Similarly, Theorem~7 in \citet{bourdaud2022introduction} shows that the composition of $k \in H^{1}$ and $f \in H^{1}$ yields $k \circ f \in H^{1}$.
\end{proof}

\subsection{Proof of Corollary~\ref{thm_kernel_r1_square_gen} and \ref{cor_erm_sq}}\label{app_proof_kernel}
\begin{proof}

First assume that $|b| \leq \alpha\Lambda + 1$, which will be shown later. Then the corresponding Rademacher complexity is
\begin{align}
\hat{\mathfrak{R}}_S(\mathcal{F}) &= \frac{1}{n} \mathbb{E}_\sigma \left[ \sup_{(f,b) \in \mathcal{F}} \sum_{i=1}^n \sigma_i (f(x_i) + b) \right]\\
&= \frac{1}{n} \mathbb{E}_\sigma \left[ \sup_{\|f\|_{\mathcal{H}} \leq \alpha,\, |b| \leq \alpha\Lambda + 1} \sum_{i=1}^n \sigma_i f(x_i) + \sum_{i=1}^n \sigma_i b \right]\\
&\leq \frac{1}{n} \mathbb{E}_\sigma \left[ \sup_{\|f\|_{\mathcal{H}} \leq \alpha} \sum_{i=1}^n \sigma_i f(x_i) \right]
+ \frac{1}{n} \mathbb{E}_\sigma \left[ \sup_{|b| \leq \alpha\Lambda + 1} \sum_{i=1}^n \sigma_i b \right].\label{rademacher1}
\end{align}

On the basis of the above, the first term of Eq.~\eqref{rademacher1} can be bounded by \( \alpha\Lambda / \sqrt{n} \) from the standard derivation of the Rademacher complexity for the kernel functions, see \citet{Mohri} for the derivation. The second term is that the supremeum with respect to $b$ is that if \( \sum \sigma_i \)  is positive, then  \( b = \alpha\Lambda + 1 \), if  \( \sum \sigma_i \) is negative, then  \( b = -(\alpha\Lambda + 1) \).

Thus, from the Massart's lemma, where the hypothesis set is 
\[
\{ (\alpha\Lambda + 1),\, -(\alpha\Lambda + 1) \} \subset \mathbb{R}^1
\]
Thus, by setting \( A \coloneqq \{ (\Lambda + 1),\, -(\Lambda + 1) \} \subset \mathbb{R} \) and this results in
\[
\frac{1}{n} \mathbb{E}_\sigma \left[ \sup_{|b| \leq \Lambda + 1} \sum_{i=1}^n \sigma_i b \right]
= \frac{1}{n} \mathbb{E}_\sigma \left[ \sup_{z \in A} \sum_{i=1}^n \sigma_i z_i \right]
\leq \frac{\sqrt{2 \log 2} (\Lambda + 1)}{\sqrt{n}}
\leq \frac{2 (\alpha\Lambda + 1)}{\sqrt{n}}
\]
In conclusion, we have
\begin{align}\label{app_rade_gen}
\hat{\mathfrak{R}}_S(\mathcal{F}) \leq \frac{3 \alpha\Lambda + 2}{\sqrt{n}}    .
\end{align}
This concludes the proof of  Corollary~\ref{thm_kernel_r1_square_gen}.

Next, we show that any solution $f_\lambda$ to the regularized ERM problem must have a norm bounded by $\lambda$. Consider the objective function $J(f) = L_n(f) + \lambda \|f\|_{\mathcal{H}}^2$. By definition, the optimal solution $f_\lambda=f_\lambda'+b$ with $f'_\lambda\in \calh$ and $b\in\real$ must satisfy:
\begin{align}
    L_n(f_\lambda)+\lambda \|f_\lambda\|^2_\mathcal{H}=\inf_{f \in \mathcal{H}} L_n(f)+\lambda \|f\|^2_\mathcal{H}
\end{align}
The infimum must be less than or equal to the objective value for any function, including the zero function $f=0$. Therefore:
\begin{align}
\inf_{f \in \mathcal{H}} L_n(f)+\lambda \|f\|^2_\mathcal{H}  \le L_n(0) + \lambda \|0\|^2_\mathcal{H} = L_n(0).
\end{align}

Since the loss is non-negative ($L_n(f_\lambda) \ge 0$), we have:
$$\lambda \|f_\lambda\|_{\mathcal{H}}^2 \le L_n(f_\lambda) + \lambda \|f_\lambda\|_{\mathcal{H}}^2 \le L_n(0)$$
This yields the desired relationship: $\|f_\lambda\|_{\mathcal{H}} \le \sqrt{L_n(0) / \lambda}$. Thus, the norm of the optimal solution is of the order $\mathcal{O}(1/\sqrt{\lambda})$. Note that $L_n(0)\leq 1$.

Based on this, we plug this $\lambda$-dependent norm bound into the standard formula for the Rademacher complexity of a function class in an RKHS. The complexity for a class of functions $\{f+b \mid \|f\|_{\mathcal{H}} \le \alpha, |b| \le \beta\}$ is bounded by the sum of the complexities for the kernel part and the bias part. And $\alpha$ corresponds to $\sqrt{L_n(0) / \lambda}$. As for $\beta$, let us recall the definition of the sqaured loss
\begin{align}
    L_n(f)&=\sum_i\frac1n(y_i-(\sum_ja_jk(x_i,x_j)+b))^2+\lambda\|f\|_\calf^2\\
    &=\sum_{i:y_i=1}\frac1n(1-(\sum_ja_jk(x_i,x_j)+b))^2+\lambda\|f\|_\calf^2+\sum_{i:y_i=0}\frac1n(0-(\sum_ja_jk(x_i,x_j)+b))^2+\lambda\|f\|_\calf^2
\end{align}
From the above relation, we can see that $|b|\leq \alpha\Lambda+1$ is the region where the optimal solution exists.

When using ERM solution, by substituting $\alpha=\sqrt{L_n(0)/\lambda}$ into the estimate of the Rademacher complexity in Eq.~\eqref{app_rade_gen}, we obtain the result.
\end{proof}

\subsection{Proof of Theorem~\ref{thm_composite_higher_dim}}
\begin{proof}
From Proposition 7 in \citet{rakhlin19a}, the RKHS associated with the Laplace kernel on $\real^d$ is the Sobolev space $H^{s} = W^{2,s}$ with $s = (d + 1)/2$ (see also \citet{buchholz2022kernel}).
According to Theorem~7 in \citet{bourdaud2022introduction}, for given functions $f$ and $r$, we have $r \circ f \in H^{s}$ if and only if $r \in H^{s}$. However, here we take $r$ from a Lipschitz function class, so  $r$ is not generally in $H^{s}$, and thus the result follows.

Similarly, $k \in \calk_1$ implies $k \in H^{1}$ but not $k \in H^{s}$. Therefore, we obtain the result.
\end{proof}

\subsection{Proof of Theorem~\ref{theorem_universal__kernel}}

\begin{proof}
We use the approximation theory given in Corollary 5.29 of \citet{steinwart2008support}, which states that for a continuous Nemitski loss function $\ell$, the Bayes risk and the minimum achievable risk within the RKHS are equivalent. According to Definition 2.16 in \citet{steinwart2008support}, the squared loss with $L_2$ regularization is a Nemitski loss function. Then, Corollary 5.29 implies that
\begin{align}
\inf_{f} L(f) = \inf_{f \in \calf} L(f),
\end{align}
where the infimum on the left-hand side is taken over all measurable functions from $\calx \to \real$. This equivalence follows from the approximation power of universal kernels; see the proof of Corollary 5.29 in \citet{steinwart2008support} for details.

With this in mind, and following the argument in the proof of Claim 5.1 in \citet{blasiok2023when}, we upper bound the post-processing gap as follows. By the definition of the infimum,
\[
f^* = \argmin_{f \in \mathcal{F}} \Ex \ell_{\text{sq}}(f(X),Y)  + \lambda \|f\|_\calf^2.
\]
Consider the solution \( f^* \) and an arbitrary \( h \in \lip_{L=1} \). Note that \( f^* + h \circ f^* \) is a measurable function. For notational simplicity, we denote \( r \circ f^* \coloneqq f^* + h \circ f^* \).

\begin{align}
     &\Ex \ell_{\text{sq}}(f_n^*(X),Y)-\Ex \ell_{\text{sq}}(r\circ f_n^*(X),Y)\\
     &=\Ex \ell_{\text{sq}}(f_n^*(X),Y))+\lambda\|f\|_\calf^2-\lambda\|f\|_\calf^2-\Ex \ell_{\text{sq}}(r\circ f_n^*(X),Y)\\
     &\leq L(f_n^*)-L(f^*)+L(f^*)-L(r\circ f_n^*)+\lambda\\
     &\leq L(f^*)+\mathrm{err}_{\mathrm{ex}}(n)-L(r\circ f_n^*)+\lambda\\
     &\leq \lambda+\mathrm{err}_{\mathrm{ex}}(n)
\end{align}
In the last inequality, we used the fact that
\begin{align}
    L(f^*)-L(r\circ f_n^*)=\inf_{f\in\calf}L(f)-L(r\circ f_n^*)=\inf_{f}L(f)-L(r\circ f_n^*)\leq 0
\end{align}
Since infimum is taken with respect to all measurable functions, therefore $\inf_{f}L(f)\leq L(r\circ f_n^*)$ holds.

Next we upper bound $\mathrm{err}_{\mathrm{ex}}(n)$. This is well studied in the literature of the generalization analysis and from  Proposition 4.1 in \citet{Mohri}, we have
\begin{align}
L(f^*_n)-\inf_{f\in\calf}L(f) \leq \mathrm{err}_{\mathrm{ex}}(n)\leq 2\sup_{f\in\calf}|L(f)-L_n(f)|\leq 2\left( 2\mathfrak{R}_{\mathcal{D},n}(\mathcal{F})+\sqrt{\frac{\log\frac{2}{\delta}}{2n}}\right).
\end{align}

Then we have
\begin{align}
\smce(f^*_n,\cald) \leq \sqrt{\lambda + 4 \mathfrak{R}_{\mathcal{D},n}(\mathcal{F})+\sqrt{2\frac{\log\frac{2}{\delta}}{n}}}.
\end{align}

\end{proof}

\subsection{Proof of Corollary~\ref{thm_erm_dual}}
We first upper bound the training dual smooth CE. The proof is almost identical to that of Theorem~\ref{thm_erm_smooth}. We first introduce the training dual smooth CE as
\textbf{\begin{align}
    \smce^{(\psi,1/4)}(g, S) \coloneqq \sup_{h \in \mathrm{Lip}_{1/4}(\mathbb{R}, [-1,1])} \frac{1}{n}\sum_{i=1}^n\left[ h(g(X_i)) \cdot (Y_i - f(X_i)) \right].
\end{align}}
We also define the empirical counterpart of the dual post-processing gap as
\begin{align}
    \pgap^{(\psi,1/4)}(g, S) &\coloneqq  \frac{1}{n}\sum_{i=1}^n \ell^\psi(g(X_i), Y_i) - \inf_{h \in \mathrm{Lip}_{L=1}(\real,[-4,4])}  \frac{1}{n}\sum_{i=1}^n \ell^\psi(f(X_i)+h(g(X_i)), Y_i)]
\end{align}
We can prove that
\begin{align}\label{app_proof_dual_smooth_empirical}
2 \smce^{(\psi,1/4)}(g, S)^2 \leq \pgap^{(\psi,1/4)}(g, S) \leq 4 \smce^{(\psi,1/4)}(g, S).
\end{align}
The proof of this is exactly the same as that of Lemma 4.7 in \citet{blasiok2023when}, where we simply replace the expectation by $\cald$ with that of the empirical expectation. 

To simplify the notation, we express $r\circ g \coloneqq g + h \circ g$. Then we will upper bound the training 
\begin{align}
&\frac{1}{n}\sum_{i=1}^n\ell^\psi(g(X_i), Y_i)-\inf_{h \in \mathrm{Lip}_{L=1}(\real,[-4,4])}\frac{1}{n}\sum_{i=1}^n\ell^\psi(r\circ g(X_i),Y_i)\\
&=\frac{1}{n}\sum_{i=1}^n\ell^\psi(g(X_i), Y_i)+\lambda\|g\|_\calg^2-\lambda\|g\|_\calg^2-\inf_{h \in \mathrm{Lip}_{L=1}(\real,[-4,4])}\frac{1}{n}\sum_{i=1}^n\ell^\psi(r\circ g(X_i),Y_i)\\
&=\frac{1}{n}\sum_{i=1}^n\ell^\psi(g(X_i), Y_i)+\lambda\|g\|_\calg^2-\lambda\|g\|_\calg^2\\
&\qquad-\inf_{h \in \mathrm{Lip}_{L=1}(\real,[-4,4])}\left(\frac{1}{n}\sum_{i=1}^n\ell^\psi(r\circ g(X_i),Y_i)+\lambda\|r\circ g\|_\calg^2-\lambda\|r\circ g\|_\calg^2\right)\\
&=L_n(g)-\inf_{h \in \mathrm{Lip}_{L=1}(\real,[-4,4])}\left(\frac{1}{n}\sum_{i=1}^n\ell^\psi(r\circ g(X_i),Y_i)+\lambda\|r\circ g\|_\calg^2+\lambda\|g\|_\calg^2-\lambda\|r\circ g\|_\calg^2\right)\\
&\leq L_n(g)-\inf_{h \in \mathrm{Lip}_{L=1}(\real,[-4,4])}\left(\frac{1}{n}\sum_{i=1}^n\ell^\psi(r\circ g(X_i),Y_i)+\lambda\|r\circ g\|_\calg^2\right)\\
&\qquad-\inf_{h \in \mathrm{Lip}_{L=1}(\real,[-4,4])}\left(\lambda\|g\|_\calg^2-\lambda\|r\circ g\|_\calg^2\right)\\
&\leq L_n(g^*_n)+\mathrm{err}_n(g)-\inf_{h \in \mathrm{Lip}_{L=1}(\real,[-4,4])}\left(\frac{1}{n}\sum_{i=1}^n\ell^\psi(r\circ g(X_i),Y_i)+\lambda\|r\circ g\|_\calg^2\right)+\lambda G^2\\
&=L_n(g^*_n)+\mathrm{err}_n(g)-\inf_{h \in \mathrm{Lip}_{L=1}(\real,[-4,4])}L_n(r\circ g)+\lambda G^2\\
&\leq \mathrm{err}_n(g)+\lambda G^2
\end{align}
where we used that $L_n(f)=L_n(f^*_n)+\mathrm{err}_n(f)$ and $\|r\circ g\|_\calg^2-\|g\|_\calg^2\leq G$.
Moreover, we used the relation
\begin{align}
 L_n(g^*_n)\leq \inf_{h \in \mathrm{Lip}_{L=1}(\real,[-4,4])}L_n(r\circ g)
\end{align}
in the last line. Then, using Eq.~\eqref{app_proof_dual_smooth_empirical}, we have the upper-bound for the training dual smooth CE.

Next, we study the generalization error for $\smce^{(\psi,1/4)}(g, \mathcal{D})$

\begin{align}
    &\smce^{(\psi,1/4)}(g, \mathcal{D})-\smce^{(\psi,1/4)}(g, S)\\
    &=\sup_h \Ex h(g(X))\cdot (Y-f(X))-\sup_{h'}\frac{1}{n}\sum_i h'(g(X_i))\cdot (Y_i-f(X_i))\\
    &=\sup_h\Ex\left[\frac{1}{2}(Y-(f-h(g(X)))^2-\frac{1}{2}(Y-f)^2-\frac{1}{2}h(g(X))^2\right]\\
    &-\sup_{h'}\frac{1}{n}\sum_i\left[\frac{1}{2}(Y_i-(f_i-h'(g(X_i)))^2-\frac{1}{2}(Y-f_i)^2-\frac{1}{2}h'(g(X_i))^2\right]\\
    &\leq \sup_\eta\Big( \Ex\left[\frac{1}{2}(Y-(f-h(g(X)))^2-\frac{1}{2}(Y-f)^2-\frac{1}{2}h(g(X))^2\right]\\
    &-\frac{1}{n}\sum_i\left[\frac{1}{2}(Y_i-(f_i-h(g(X_i)))^2-\frac{1}{2}(Y-f_i)^2-\frac{1}{2}h(g(X_i))^2\right]\Big)
\end{align}

Then we proceed the proof exactly in the same way as Appendix~\ref{sec_proof_rademacher}.

\subsection{Proof of Corollary~\ref{thm_kernel_r1_logistic_gen} and \ref{col_klr}}
By the assumptions, we can apply Corollary~\ref{thm_erm_dual} to this setting. All we need is to estimate the Rademacher complexity.

We then define the set of functions obtained by $\sigma(g)$ for any $g\in\calg$ as $\calf$, and $\sigma$ is $1/4$ Lipshitz function, 
\begin{align}
    \hat{\mathfrak{R}}_S(\mathcal{F})\leq \frac{1}{4}\hat{\mathfrak{R}}_S(\mathcal{G})
\end{align}
and $\hat{\mathfrak{R}}_S(\mathcal{G})$ can be bounded exactly in the same way as the proof of Corollary~\ref{thm_kernel_r1_square_gen} in Appendix~\ref{app_proof_kernel}.

Next, we show that any solution $g_\lambda$ to the regularized ERM problem must have a norm bounded by $\lambda$. Consider the objective function $J(g) = L_n(g) + \lambda \|g\|_{\mathcal{H}}^2$. This yields the desired relationship: $\|g_\lambda\|_{\mathcal{G}} \le \sqrt{L_n(0) / \lambda}$.

$\len(v, y) \coloneqq -y\log v - (1 - y)\log(1 - v)$.

Based on this, we plug this $\lambda$-dependent norm bound into the standard formula for the Rademacher complexity of a function class in an RKHS. The complexity for a class of functions $\{f+b \mid \|f\|_{\mathcal{H}} \le \alpha, |b| \le \beta\}$ is bounded by the sum of the complexities for the kernel part and the bias part. And $\alpha$ corresponds to $\sqrt{L_n(0) / \lambda}$. As for $\beta$, let us recall the definition of the sqaured loss
\begin{align}L_n(g)&=\sum_{i:y_i=1}\frac1n\log (1+e^{-(g(x_i)+b)})+\sum_{i:y_i=0}\frac1n\log (1+e^{g(x_i)+b}))+\lambda\|g\|_\calg^2
\end{align}
From the above relation, we can see that $|b|\leq \alpha\Lambda+1$ is the region where the optimal solution exists. Note that $L_n(0)\leq \log 2 <1$.

Here we also present the result that corresponds to Theorem~\ref{theorem_universal__kernel}:
\begin{theorem}
Let $k$ be a universal kernel with associated RKHS $\calh$. Let $\calg = \calh \oplus \mathbb{R} = \{g + b \mid g \in \calh, b \in \mathbb{R}\}$. Suppose there exist constant $\Lambda$ such that $\sup_{x, x' \in \mathcal{X}} k(x, x') \leq \Lambda$. Then, with probability at least $1 - \delta$ over the draw of the training dataset, it holds that
\begin{align}
\smce^{(\psi,1/4)}(g^*_n, \mathcal{D})\leq \sqrt{\lambda G^2 + \frac{3\Lambda + 2}{\sqrt{\lambda n}} + \sqrt{\frac{2\log\frac{2}{\delta}}{n}}}.
\end{align}
\end{theorem}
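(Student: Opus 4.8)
The plan is to adapt the proof of Theorem~\ref{theorem_universal__kernel} from the squared loss to the logistic loss $\lpsi$, replacing the post-processing gap and smooth CE by their dual counterparts and carrying along the logit-norm bound $\|g\|_\calg \le G$. The first step is the reduction to the dual post-processing gap: by Corollary~\ref{cor:blasiok_pgap} we have $\smce^{(\psi,1/4)}(g^*_n,\cald) \le \sqrt{\pgap^{(\psi,1/4)}(g^*_n,\cald)}$, so it suffices to show $\pgap^{(\psi,1/4)}(g^*_n,\cald) \le \lambda G^2 + (3\Lambda+2)/\sqrt{\lambda n} + \sqrt{2\log(2/\delta)/n}$.

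Because a general universal kernel does not enjoy the closure property $g + h\circ g \in \calg$ (this is exactly the obstruction isolated in Section~\ref{sec_kernel_ridge}; see Theorem~\ref{thm_composite_higher_dim}), I would not invoke the composition hypothesis of Corollary~\ref{thm_erm_dual}. Instead, mirroring the argument behind Theorem~\ref{theorem_universal__kernel}, I would lean on approximation theory: after checking that the cross-entropy loss with $L_2$ regularization is a continuous Nemitski loss in the sense of \citet{steinwart2008support} — the softplus $\psi$ is convex, smooth, and grows linearly, so $\lpsi$ has the required growth — Corollary~5.29 of \citet{steinwart2008support} yields $\inf_{g\in\calg}\Ex\lpsi(g(X),Y) = \inf_{g}\Ex\lpsi(g(X),Y)$, the infimum on the right being over all measurable logit functions. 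With this identity in hand, I mimic the proof of Claim~5.1 of \citet{blasiok2023when} (restated as Theorem~\ref{thm_blasic_optimization}) applied at $g^*_n$: for any admissible $h$, write $r\circ g^*_n \coloneqq g^*_n + h\circ g^*_n$, add and subtract $\lambda\|g^*_n\|_\calg^2$ to form the population-regularized objective $L(g) \coloneqq \Ex\lpsi(g(X),Y) + \lambda\|g\|_\calg^2$, compare $L(g^*_n)$ with its population minimizer up to the excess risk $\mathrm{err}_{\mathrm{ex}}(n) \coloneqq L(g^*_n) - \inf_{g\in\calg}L(g)$, and use that $r\circ g^*_n$ is a bounded measurable function together with the displayed infimum equality to conclude $\pgap^{(\psi,1/4)}(g^*_n,\cald) \le \lambda G^2 + \mathrm{err}_{\mathrm{ex}}(n)$. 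The factor $G^2$ here replaces the plain $\lambda$ of the squared-loss case because $\|g\|_\calg \le G$ rather than $\|f\|_\calf \le 1$.

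It then remains to control $\mathrm{err}_{\mathrm{ex}}(n)$ and the Rademacher complexity, which is routine. Since $\lpsi(\cdot,y)$ is $1$-Lipschitz in the logit ($|\partial_s\lpsi(s,y)| = |\sigma(s)-y| \le 1$) and $\sigma$ is $1/4$-Lipschitz, the same uniform-convergence chain as in the proof of Theorem~\ref{theorem_universal__kernel} (Proposition~4.1 in \citet{Mohri} together with Talagrand's contraction lemma) bounds $\mathrm{err}_{\mathrm{ex}}(n)$, with probability $1-\delta$, by a Rademacher term plus $\sqrt{2\log(2/\delta)/n}$. For the complexity term I reuse the computation from the proof of Corollaries~\ref{thm_kernel_r1_logistic_gen} and~\ref{col_klr}: the ERM solution satisfies $\|g^*_n\|_\calg \le \sqrt{L_n(0)/\lambda} \le 1/\sqrt\lambda$ (using $L_n(0) \le \log 2 < 1$) with the bias confined to $|b| \le \Lambda/\sqrt\lambda + 1$, so the kernel-plus-bias Rademacher bound gives $\mathfrak{R}_{\cald,n}(\calg) \le (3\Lambda+2)/\sqrt{\lambda n}$. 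Plugging in, collecting constants, and taking square roots via Corollary~\ref{cor:blasiok_pgap} finishes the argument.

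The step I expect to be the main obstacle is the approximation-theoretic core of the second paragraph. Unlike the one-dimensional Laplace case there is no closure property to lean on, so the substitution of the universal-approximation identity for the composition assumption must be handled carefully — in particular, verifying the Nemitski-loss hypotheses for the regularized cross-entropy objective so that Corollary~5.29 of \citet{steinwart2008support} genuinely applies, and dealing with the regularization term for the measurable competitor $r\circ g^*_n$, which need not lie in $\calg$. Everything else is standard contraction and Rademacher bookkeeping, though one must stay attentive to the $1/4$-Lipschitz constant and the $[-4,4]$ and $[-1,1]$ ranges that define the dual post-processing gap and the dual smooth CE.
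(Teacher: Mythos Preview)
Your proposal is correct and follows essentially the same route as the paper, which states only that the proof is ``almost identical to that of Theorem~\ref{theorem_universal__kernel}, since the logistic loss is also a continuous Nemitski loss.'' You have correctly reconstructed what this entails: replace $\pgap$ and $\smce$ by their dual versions via Corollary~\ref{cor:blasiok_pgap}, invoke Corollary~5.29 of \citet{steinwart2008support} for the logistic (Nemitski) loss to substitute universal approximation for the missing closure property, bound the resulting excess risk $\mathrm{err}_{\mathrm{ex}}(n)$ by Rademacher complexity as in the squared-loss case, and reuse the kernel-plus-bias Rademacher estimate with the ERM norm bound $\|g^*_n\|_\calg \le 1/\sqrt\lambda$; the replacement of $\lambda$ by $\lambda G^2$ that you flag is exactly the modification the paper makes.
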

The proof of this theorem is almost identical to that of Theorem~\ref{theorem_universal__kernel}, since the logistic loss is also a continuous Nemitski loss and thus, we can apply the same techniques.

\section{Additional discussion}\label{app_metrics_proper}

\subsection{Post processing gap and Calibration metrics}
Following \citet{blasiok2023when}, we introduce the general proper loss function and its relation to the post-processing gap.

A proper loss function $\ell$ can always be represented using a convex function $\phi$ as follows:
\begin{align}
    \ell(p, y) = -\phi(p) - \nabla \phi(p) \cdot (y - p),
\end{align}
where $\phi: [0,1] \to \mathbb{R}$ is a convex function and $\nabla \phi(p)$ denotes a subgradient at $p$. Following \citet{blasiok2023when}, we assume that $\phi$ is differentiable.

We define the convex conjugate of the function $\phi(p)$ as follows: for all $s \in \mathbb{R}$,
\begin{align}
    \psi(s) = \sup_{p \in [0,1]} \left\{ s \cdot p - \phi(p) \right\}.
\end{align}
The \emph{dual loss} $\ell^\psi : \mathbb{R} \times \mathcal{Y} \to \mathbb{R}$ is then defined as
\begin{align}
    \ell^\psi(s, y) \coloneqq \psi(s) - s \cdot y.
\end{align}
By Fenchel–Young duality, this relationship is inverted as $p = \nabla \psi(s)$, and with these definitions, the proper loss can equivalently be written as
\begin{align}
    \ell(p, y) = \ell^\psi(\nabla \phi(p), y) = \psi(\nabla \phi(p)) - \nabla \phi(p) \cdot y.
\end{align}

We remark that the relation $p = \nabla \psi(s)$ can be interpreted as mapping logits to predicted probabilities. For details and proofs, see \citet{blasiok2023when}.

\citet{blasiok2023when} considered modeling the score function $s$ by a function $g$, and then applying the transformation $p = \nabla \psi(s)$. Thus, they proposed to apply post-processing to $g$, which leads to the following definition:
\begin{definition}[Dual post-processing gap]
Assume that $\psi$ is a differentiable and convex function with derivative $\nabla \psi(t) \in [0,1]$ for all $t \in \real$, and that $\psi$ is $\lambda$-smooth. Given $\psi$, $\ell^\psi$, $g:\calx \to \real$, and distribution $\cald$, we define the dual post-processing gap as
\begin{align}
    \pgap^{(\psi,\lambda)}(g,\cald) \coloneqq \Ex[\ell^\psi(g(X),Y)] - \inf_{h \in \lip_{1}(\real,[-1/\lambda,1/\lambda])} \Ex[\ell^\psi(g(X)+h(g(X)),Y)].
\end{align}
\end{definition}

When considering the cross-entropy loss, the dual post-processing gap corresponds to improving the logit function.

\begin{definition}[Dual smooth calibration]
Consider the same setting as in the definition of the dual post-processing gap. Given $\psi$ and $g$, define $f(\cdot) = \nabla \psi(g(\cdot))$. The dual calibration error of $g$ is defined as
\begin{align}
    \smce^{(\psi,\lambda)}(g,\cald) \coloneqq \sup_{h \in \lip_{L=\lambda}(\real,[-1,1])} \Ex[\eta(g(X)) \cdot (Y - f(X))].
\end{align}
\end{definition}

Then, similarly to the relationship between the smooth ECE and the post-processing gap, the following holds: if $\psi$ is a $\lambda$-smooth function, then
\begin{align}
    \frac{1}{2} \smce^{(\psi,\lambda)}(g,\cald)^2 \leq \frac{\lambda}{2} \pgap^{(\psi,\lambda)}(g,\cald)^2 \leq \smce^{(\psi,\lambda)}(g,\cald),
\end{align}
and
\[
\smce(f,\cald) \leq \smce^{(\psi,\lambda)}(g,\cald)
\]
also holds. Thus, by studying the dual post-processing gap, we can obtain bounds on the smooth calibration error. By considering $L_2$-regularized objective function $\Ex[\ell^\psi(g(X),Y)]+\|g\|_\calg^2$ and its empirical counterpart, we can develop the theory for the general dual smooth CE and ERM in a similar way to the case of the squared and cross-entropy loss.

\subsection{Relationships different calibration metrics}
\citet{blasiok2023unify} introduced the ground truth distance for calibration, defined as follows:
\begin{definition}[True distance to calibration]
We define the true distance of a predictor $f$ from calibration as
\begin{align}
\dCE \coloneqq \inf_{g \in \calcald} \Ex_{\cald} |f(x) - g(x)|,
\end{align}
where $\calcald$ denotes the set of predictors that are perfectly calibrated with respect to $\cald$.
\end{definition}

This provides an ideal notion for measuring calibration; see \citet{blasiok2023unify} for details. They showed that the smooth CE both upper and lower bounds the true distance to calibration:
\begin{align}
\smce(f,\cald) \leq \dCE \leq 4\sqrt{2 \smce(f,\cald)}.
\end{align}

On the other hand, the commonly used ECE, defined as
\begin{align}
\ece_\cald(f) \coloneqq \Ex_\cald \left[ \left| \Ex_\cald[y | f(x)] - f(x) \right| \right],
\end{align}
is discontinuous, and \citet{blasiok2023unify} showed that ECE does not lower bound $\dCE$ unless continuity of the conditional expectation is assumed.

\citet{blasiok2023unify} also established the relationship between $\dCE$ and the binned ECE. Given a partition $\cali = \{I_1, \dots, I_m\}$ of $[0,1]$ into intervals, the binned ECE is defined as
\begin{align}
\binece_\cald(f, \cali) = \sum_{j \in [m]} \left| \Ex[(f - y)\mathbbm{1}(f \in I_j)] \right|.
\end{align}
They showed that by adding the bin widths and minimizing over the choice of partition, we obtain the following definition:
\begin{align}
\text{intCE}(f) \coloneqq \min_{\cali} \left( \binece_\cald(f, \cali) + w(\cali) \right),
\end{align}
where
\begin{align}
w(\cali) \coloneqq \sum_{j \in [m]} \left| \Ex w(I_j) \mathbbm{1}(f \in I_j) \right|,
\end{align}
and $w(I)$ denotes the width of interval $I$. Then, the following bound holds (Theorem 6.3 in \citet{blasiok2023unify}):
\begin{align}
\dCE \leq \text{intCE}(f) \leq 4\sqrt{\dCE}.
\end{align}

As we have seen, bounding the smooth CE leads to a bound on $\dCE$, which in turn bounds $\text{intCE}(f)$, which corresponds to the binned ECE, which is optimized with respect to the partition.

\section{Details of experimental settings}
\label{app:detail_exp}
In this section, we summarize the detail information of our numerical experiments in Section~\ref{sec:experiments}.
Our experiments were conducted on NVIDIA GPUs with $32$GB memory (NVIDIA DGX-1 with Tesla V100 and DGX-2).
The source code to reproduce all experiments is available at \url{https://github.com/msfuji0211/erm_calibration}.

\begin{table}[th]
\centering
\caption{Datasets used in our experiments}
\label{table:datasets}
\begin{tabular}{lcccc}
\toprule
\textbf{Dataset} & \textbf{Classes} & \textbf{Train data ($\ntr$)} & \textbf{Recalibration data ($\nre$)} & \textbf{Test data ($\nte$)} \\
\midrule \midrule
KITTI & $2$ & $16000$ & $1000$ & $8000$ \\
PCam & $2$ & $22768$ & $1000$ & $9000$ \\
\bottomrule
\end{tabular}
\end{table}

\subsection{Toy data experiments (\texorpdfstring{$\calx=\mathbb{R}$}{Lg})}
\label{app:detail_exp_synthesize}
To investigate the behavior of different kernel-based methods under controlled conditions, we first conduct experiments on synthetic two-dimensional binary classification tasks. These toy experiments serve to isolate and visualize model behavior with respect to classification performance and calibration quality, without the confounding factors present in real-world datasets.

\textbf{Data generation.}
We generate synthetic data using a simple but structured stochastic process. For each of $n$ samples, a binary label $y \in \{0, 1\}$ is drawn independently from a Bernoulli$(0.5)$ distribution. Given the label, the input feature $x \in \mathbb{R}^2$ is sampled from a Gaussian distribution centered at $\mu_1 = [-1, -1]^T$ for $y = 1$, and at $\mu_0 = [1, 1]^T$ for $y = 0$, with identity covariance $\Sigma = I_2$ in both cases. That is,
\[
x \mid y = 1 \sim \mathcal{N}([-1, -1]^T, I), \quad x \mid y = 0 \sim \mathcal{N}([1, 1]^T, I).
\]
This construction induces a smooth but nonlinear Bayes decision boundary, suitable for evaluating kernel methods.

\textbf{Models and kernels.}
We evaluate two models:
\begin{itemize}
    \item \textbf{KRR}: Kernel Ridge Regression with theoretically motivated $\lambda_n = n^{-1/2}$ for Gaussian kernels and $\lambda_n = n^{-1/3}$ for Laplace kernels.
    \item \textbf{KLR}: Kernel Logistic Regression optimized via gradient descent.
\end{itemize}
Each model is evaluated using two kernels: the Gaussian kernel $k(x, x') = \exp(-\|x - x'\|^2 / 2\sigma^2)$ and the Laplace kernel $k(x, x') = \exp(-\|x - x'\| / \sigma)$. For each kernel, the bandwidth $\sigma$ is selected using the median heuristic on the training data.

\textbf{Metrics.}
We assess both accuracy and calibration using the following metrics:
\begin{itemize}
    \item \textbf{Kernel Calibration Error (KCE)}: Evaluated with both Gaussian and Laplace kernels, with $\sigma$ determined by a heuristic on the predicted confidence vector.
    \item \textbf{Smooth Calibration Error (SCE)}: A continuous variant of calibration error designed for better sample efficiency.
    \item \textbf{Expected Calibration Error (ECE)}: Classical binning-based calibration metric with the number of bins set to $\lfloor n^{1/3} \rfloor$ for $n$ data points, following \citet{futami2024information, fujisawa2025}.
\end{itemize}

\textbf{Experimental protocol.}
We evaluate performance as a function of training set size, with $n_{\text{train}}$ logarithmically spaced from 100 to 10,000. For each setting, experiments are repeated with 10 different random seeds for robustness. We also evaluate sensitivity to the regularization parameter $\lambda$ by fixing $n_{\text{train}} = 10{,}000$ and varying $\lambda$ over a logarithmic grid from $10^{-4}$ to $10^{2}$.

\textbf{Implementation.}
All methods are implemented using PyTorch. Gradient descent for KLR is run for up to 1,000 iterations with a step size of $0.01$ and stopping tolerance of $10^{-6}$. Results are reported on both training and test sets. Each experiment logs the metrics above and saves results in a CSV format for post-hoc statistical analysis.

\subsection{Recalibration experiments (\texorpdfstring{$\calx=\mathbb{R}$}{Lg})}
\label{app:detail_exp_recalibration}
We provide the details of the datasets along with the number of training, recalibration, and test samples in Table~\ref{table:datasets}. For the models, we used XGBoost~\citep{Chen16}, Random Forests~\citep{breiman01}, and a one-layer neural network (NN) for the KITTI and PCam experiments.
All experiments—including the training of XGBoost, Random Forests, and the one-layer NN—were conducted using code adapted from \citet{wenger20a}~\footnote{\url{https://github.com/JonathanWenger/pycalib}}.

\textbf{Performance evaluation:}
We evaluated predictive accuracy and binned ECE, using $B = \lfloor n_{\mathrm{re}}^{1/3} \rfloor$, in accordance with the theoretical insights from \citet{futami2024information, fujisawa2025}.
Additionally, we included two other calibration metrics: KCE and SCE.
To train the recalibration functions, we performed 10-fold cross-validation and reported the mean and standard deviation of both performance metrics.

\subsection{Real dataset experiments (Binary classification benchmarks; \texorpdfstring{$\calx=\mathbb{R}^{d}$}{Lg})}
\label{app:detail_exp_realdata}
We perform binary classification experiments using real-world tabular datasets to evaluate calibration and generalization performance across various kernel methods and sample sizes. Two separate protocols are employed:

\textbf{(A) Sample size variation experiment.}
This setting aims to evaluate how calibration performance evolves with increasing sample size. We consider the following methods: (i) KRR and (ii) KLR, each with either an RBF or Laplace kernel. For scalability, we use random Fourier features (RFF) for KLR. The Laplace kernel is approximated via a variant of RFF using samples from a Cauchy distribution. The corresponding feature mapping is implemented in our \texttt{LaplaceSampler} class.

For each dataset, we split the data into train/test with an 80/20 ratio while maintaining class balance. For training, we apply stratified subsampling of size $n \in \{50, \ldots, 2000\}$ (log-spaced, with 10 candidates). Each experiment is repeated with 5 different random seeds. The regularization hyperparameters are fixed as follows: $\alpha = 0.1$ for KLR and $\alpha = n^{-1/2}$ (RBF) or $\alpha = n^{-1/3}$ (Laplace) for KRR, based on empirical performance.

Bandwidth parameters for both kernels are selected via the median heuristic: for the RBF kernel, $\gamma = \frac{1}{2 \sigma^2}$; for the Laplace kernel, $\gamma = \frac{1}{\sigma}$, where $\sigma$ is the median pairwise Euclidean distance among training samples.

\textbf{(B) Regularization parameter variation.}
To assess sensitivity to the regularization hyperparameter, we fix the training set size at $n=2000$ and vary $\alpha$ over a logarithmic grid: $\alpha \in \{10^{-4}, \ldots, 10^{2}\}$. The same model families are considered as in (A), using fixed kernel parameters ($\gamma=0.1$ for all models) to isolate the impact of $\alpha$.

\textbf{Evaluation metrics.}
We report three calibration metrics: ECE with optimal bins~\citep{futami2024information, fujisawa2025}, smoothed ECE, and MMCE. For KRR, probabilities are obtained by clipping regression outputs to $[10^{-6}, 1 - 10^{-6}]$ for stability.

\textbf{Datasets.}
We use six binary classification datasets from OpenML: \texttt{kr-vs-kp}, \texttt{spambase}, \texttt{sick}, \texttt{churn}, and \texttt{Satellite}. Features are standardized after applying appropriate imputation and one-hot encoding using scikit-learn pipelines. All preprocessing steps are fit only on the training set to avoid data leakage.

\textbf{Reproducibility.}
All experiments are implemented in Python using scikit-learn and CVXPY. Stratified sampling ensures class balance in subsamples. The full experimental code and data generation scripts will be made available upon publication.

\section{Additional experimental results}
\label{app:additional_res}
In this section, we present additional experimental results.
Figures~\ref{fig:res_recab_all_kitti}–\ref{fig:res_recab_all_pcam} show the complete results of our recalibration experiments described in Section~\ref{subsec:exp_recalibration}.
Consistent with our theoretical analysis, we observe that increasing the regularization parameter $\lambda$ leads to higher smooth CE for both Laplace and Gaussian kernels, reflecting the expected effect of stronger regularization.
Conversely, increasing the recalibration sample size $n_{\mathrm{re}}$ consistently lowers the smooth CE, demonstrating the anticipated convergence behavior.
These findings highlight the practical applicability of our theory to real-world recalibration scenarios.

We further show our experimental results on some real-world datasets explained in Appendix~\ref{app:detail_exp_realdata} in Figures~\ref{fig:res_real_n} and \ref{fig:res_real_lambda}.
Similarly, we observe that setting the regularization parameter $\lambda$ too small or too large results in unstable smooth CE values for both Laplace and Gaussian kernels.
In contrast, increasing the recalibration sample size $n_{\mathrm{re}}$ consistently reduces the smooth CE in most cases, exhibiting convergence behavior aligned with our theoretical results.
These findings further support the reliability of our theory, demonstrating its applicability to real-world binary classification tasks.

\begin{figure}[t]
    \centering
    \includegraphics[width=\textwidth]{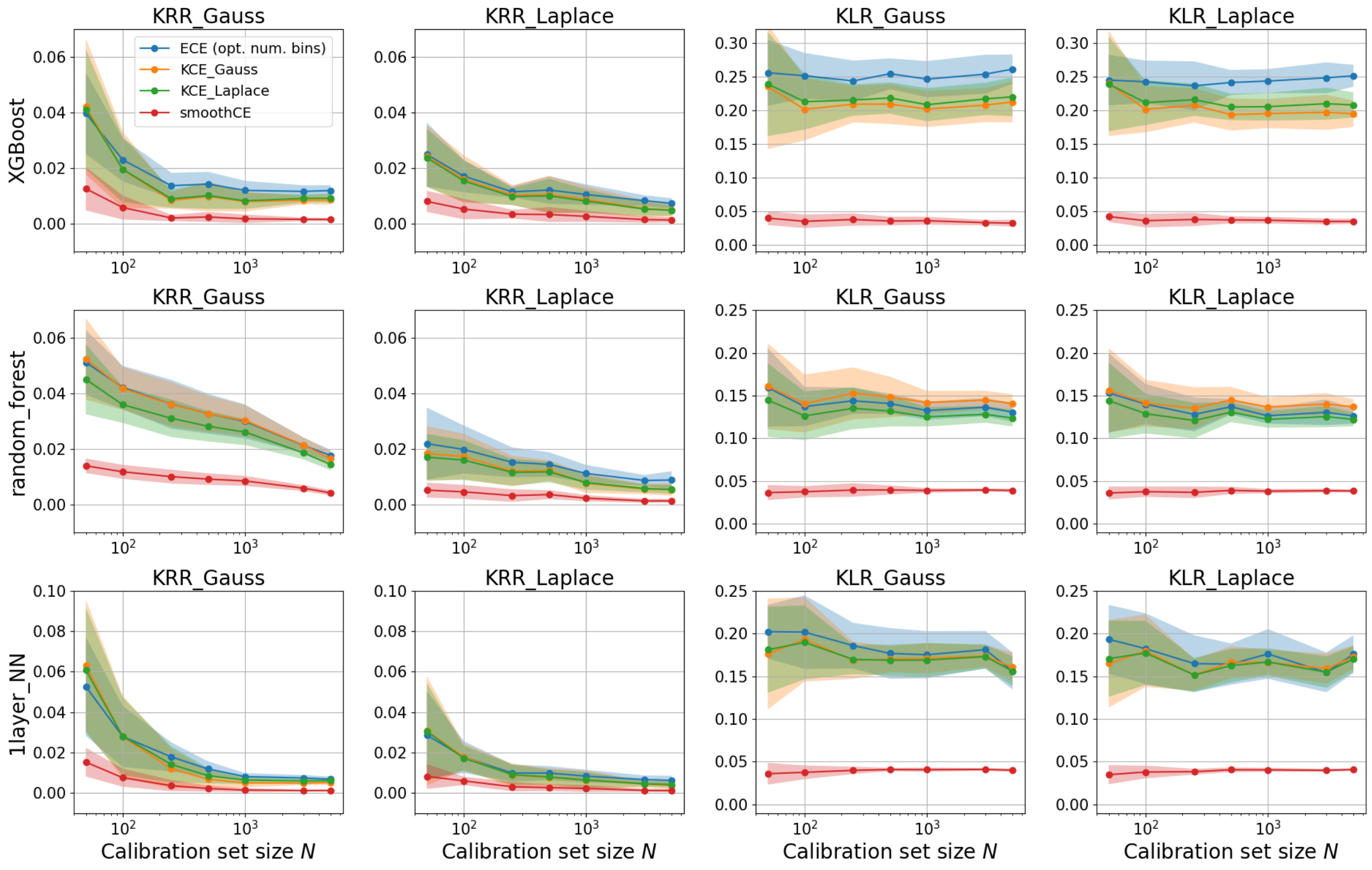}
    \caption{All Experimental Results of Recalibration: Effect of Recalibration Sample Size on Calibration Metrics on the KITTI dataset.}
    \label{fig:res_recab_all_kitti}
\end{figure}

\begin{figure}[t]
    \centering
    \includegraphics[width=\textwidth]{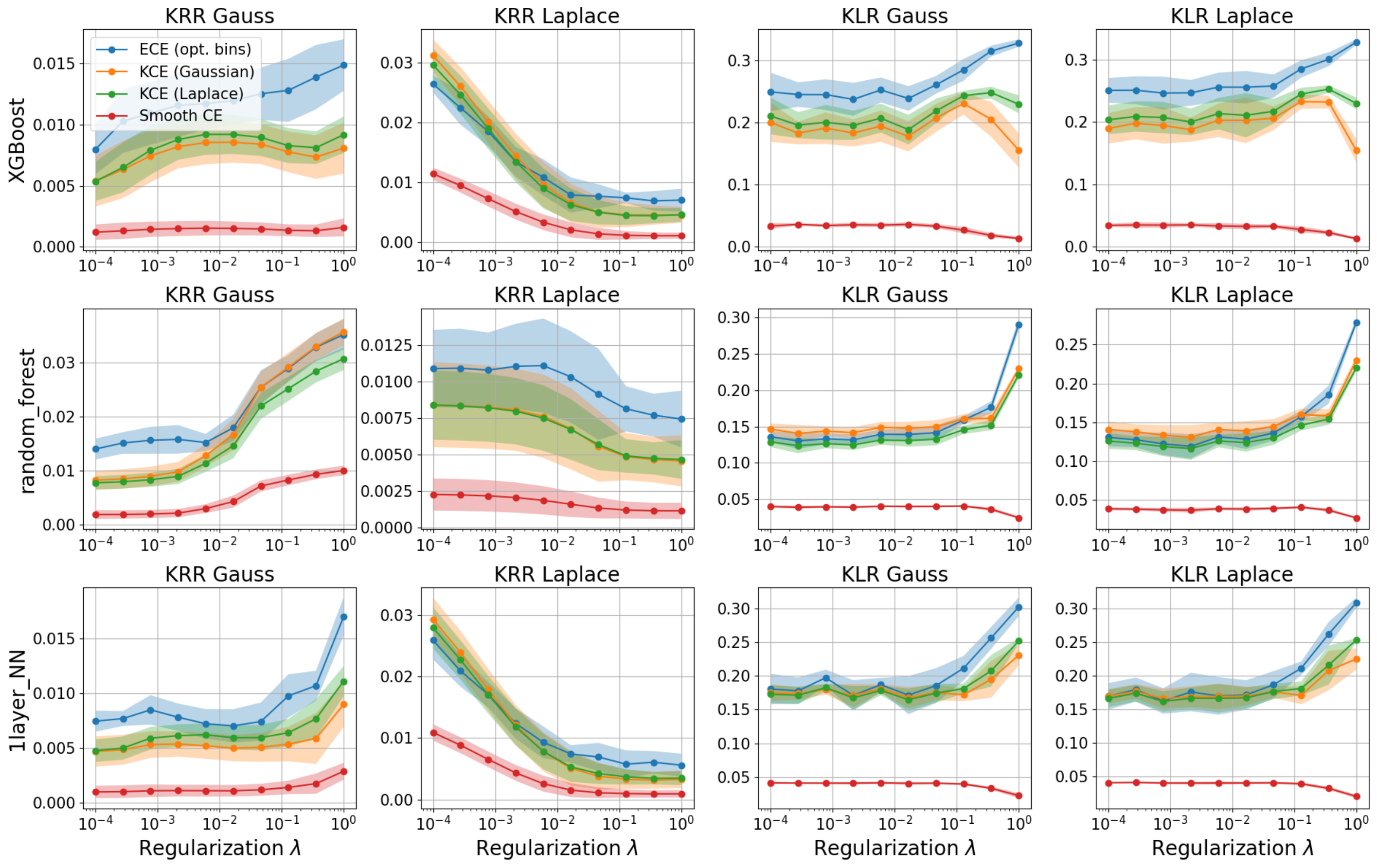}
    \caption{All Experimental Results of Recalibration: Effect of Regularization parameter $\lambda$ on Calibration Metrics on the KITTI dataset.}
    \label{fig:res_recab_all_kitti_lam}
\end{figure}

\begin{figure}[t]
    \centering
    \includegraphics[width=\textwidth]{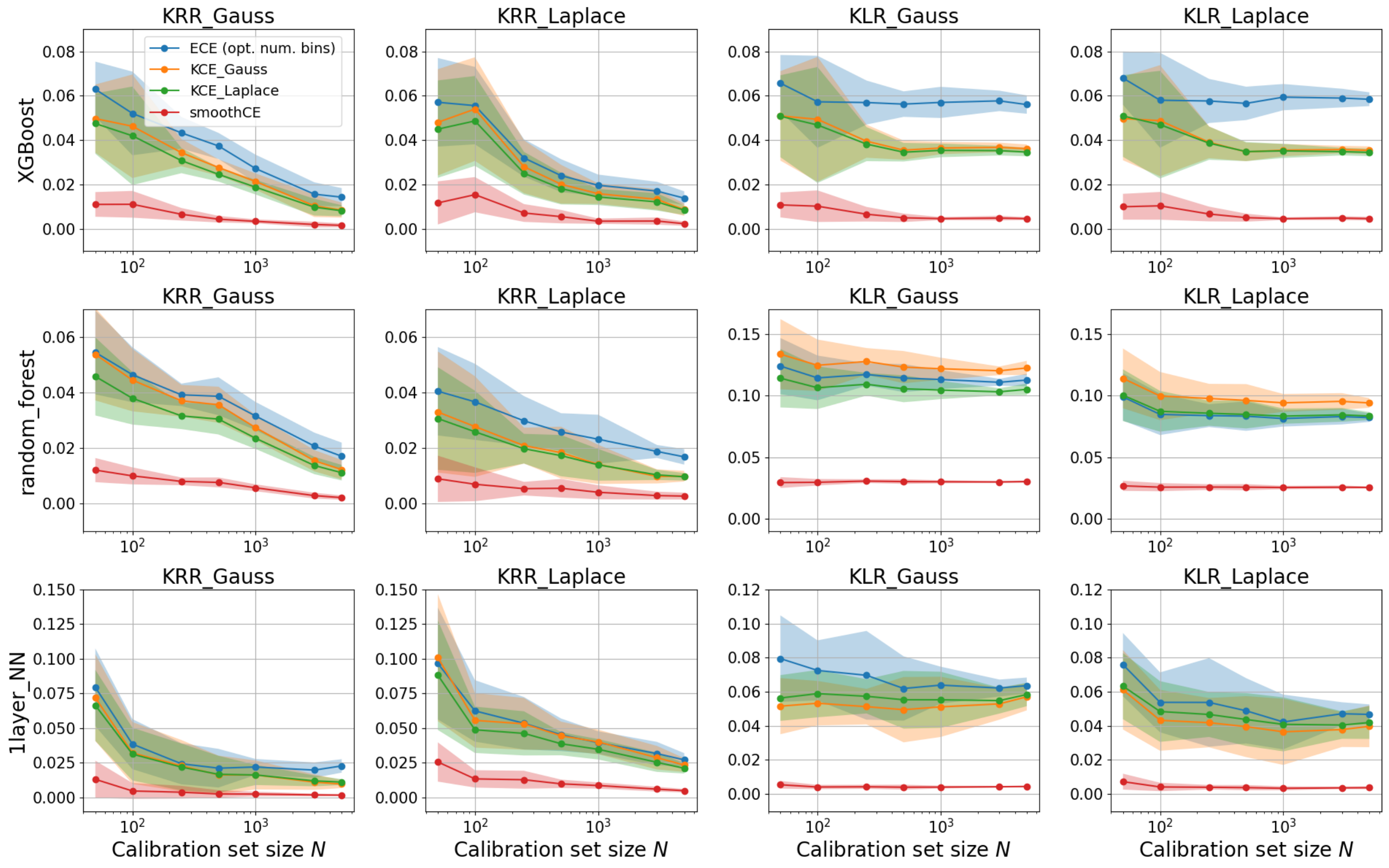}
    \caption{All Experimental Results of Recalibration: Effect of Recalibration Sample Size on Calibration Metrics on the PCam dataset.}
    \label{fig:res_recab_all_pcam}
\end{figure}

\begin{figure}[t]
    \centering
    \includegraphics[width=\textwidth]{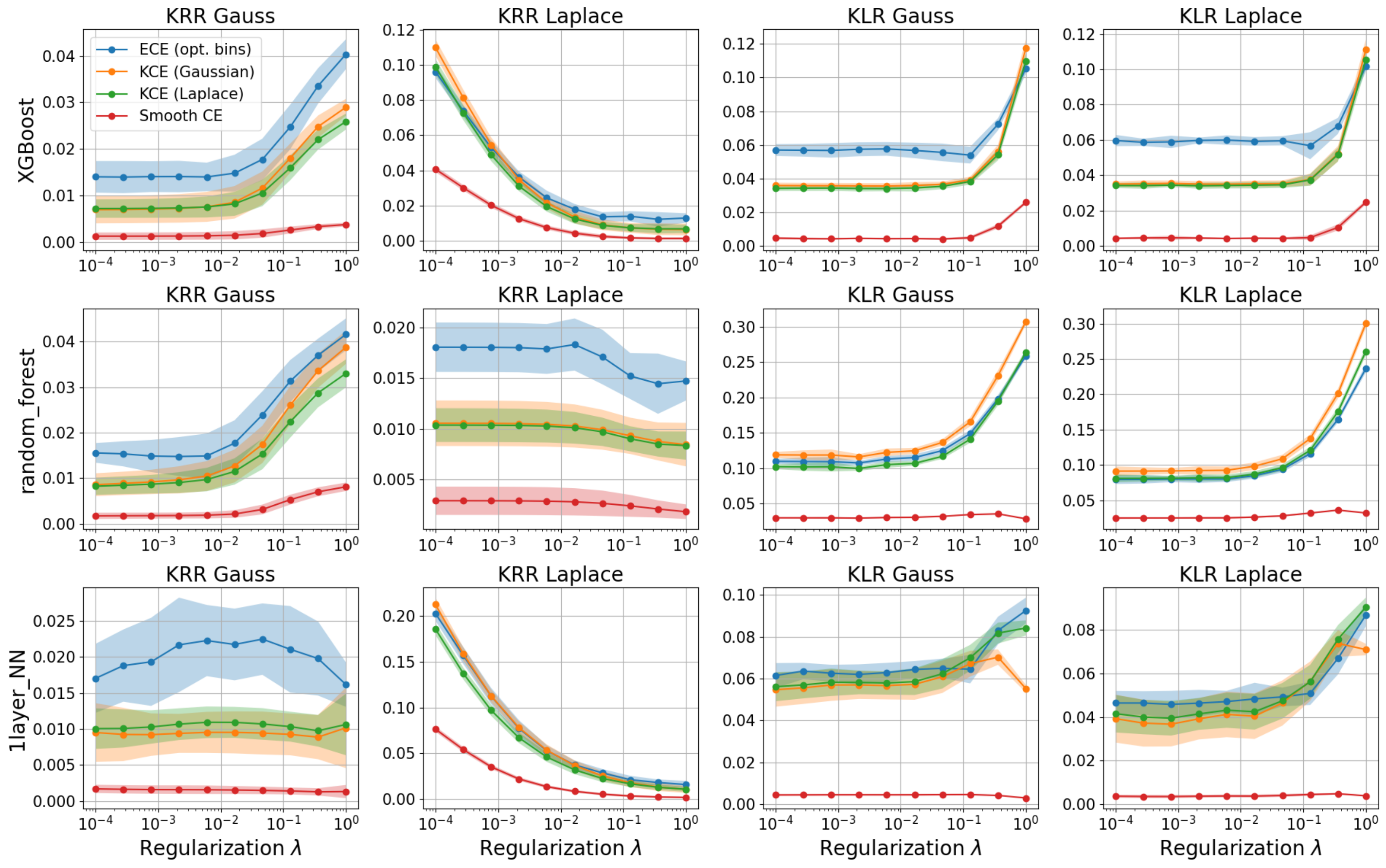}
    \caption{All Experimental Results of Recalibration: Effect of Regularization parameter $\lambda$ on Calibration Metrics on the PCam dataset.}
    \label{fig:res_recab_all_pcam_lam}
\end{figure}

\begin{figure}[t]
    \centering
    \includegraphics[width=\textwidth]{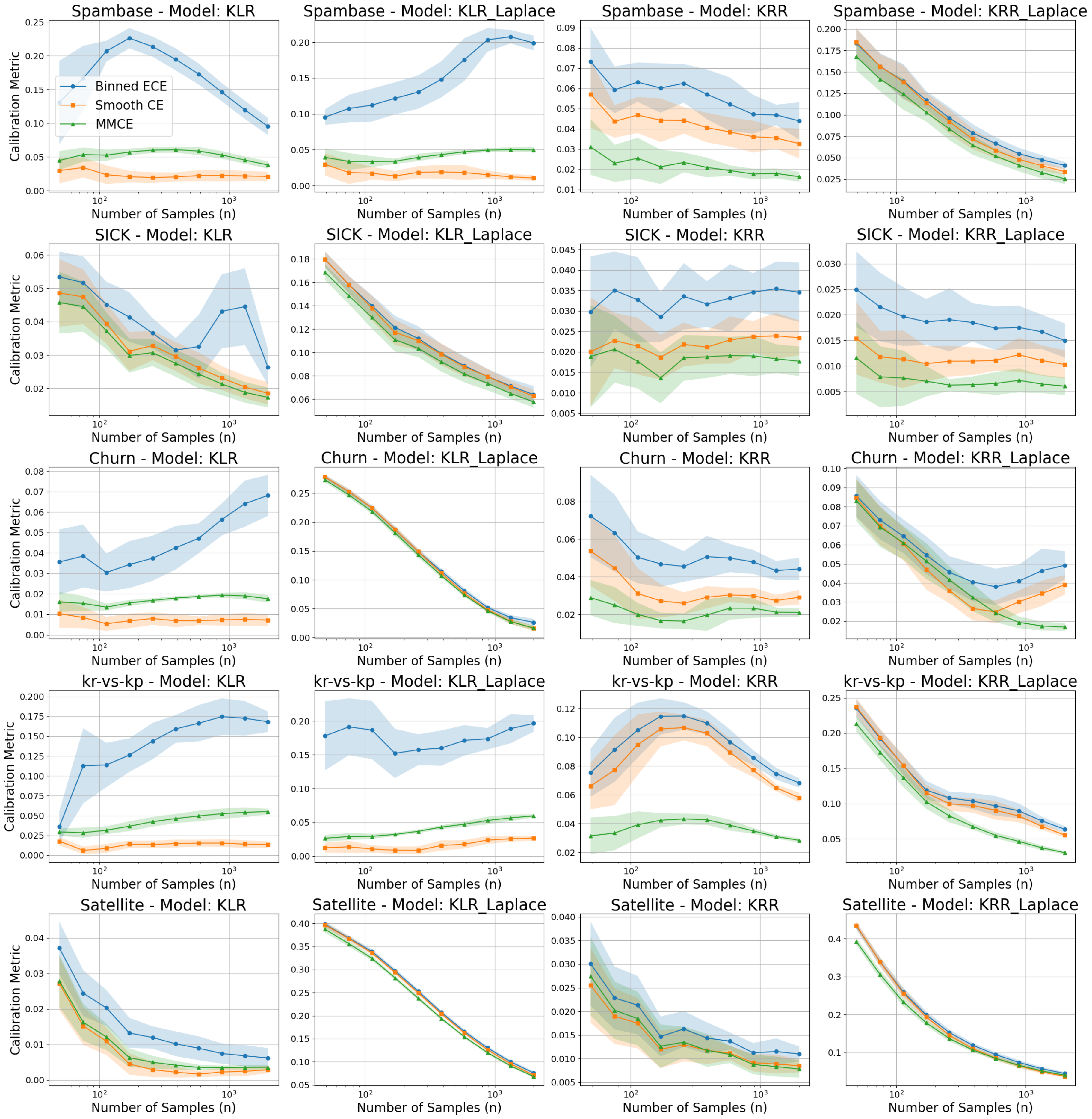}
    \caption{Effect of Sample Size on Calibration Metrics on the real world datasets.}
    \label{fig:res_real_n}
\end{figure}

\begin{figure}[t]
    \centering
    \includegraphics[width=\textwidth]{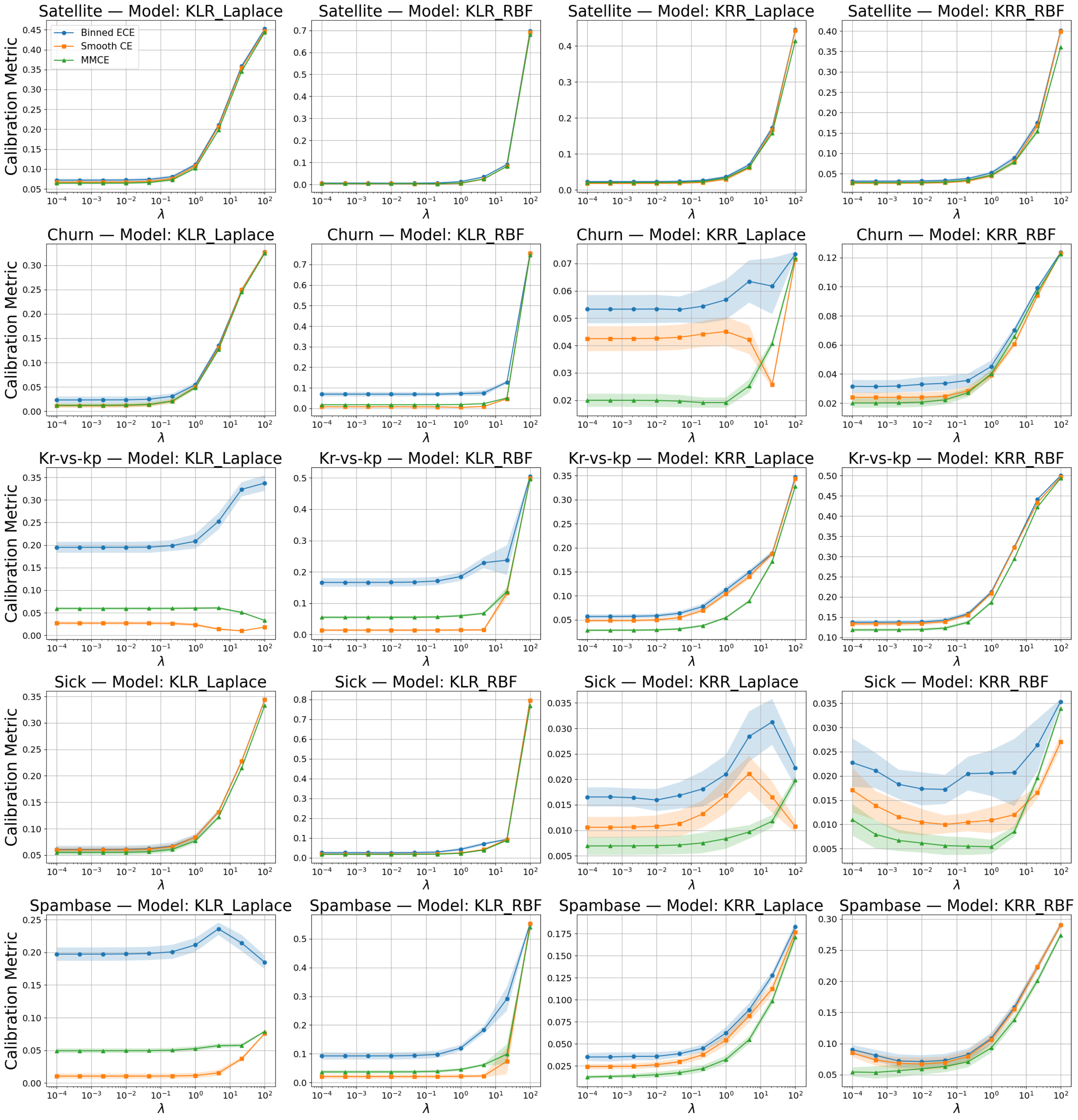}
    \caption{Effect of Sample Size on Calibration Metrics on the real world datasets.}
    \label{fig:res_real_lambda}
\end{figure}

\end{document}